\newcommand{\bpi}{\boldsymbol{\pi}}
\newcommand{\sig}{\rlap{$^*$}}
\newtheorem{thm}{Theorem}
\newtheorem{lem}{Lemma}
\definecolor{abl}{HTML}{918cc2}
\newsavebox\CBox
\newcommand{\E}{\mathbb{E}}
\newcommand{\p}{\rho}
\newcommand{\pt}{\rho^\mathrm{T=1}}
\newcommand{\pc}{\rho^\mathrm{T=0}}
\newcommand{\loss}{l_{\psi,\phi}}
\newcommand{\epehe}{\epsilon_{\mathrm{PEHE}}}
\begin{document}

\title{Proximity Matters: Local Proximity Enhanced Balancing for Treatment Effect Estimation}
\author{Hao Wang}
\affiliation{
  \institution{College of Control Science and Engineering, Zhejiang University}
  \city{Hangzhou}
  \country{China}}

\author{Zhichao Chen}
\affiliation{
  \institution{College of Control Science and Engineering, Zhejiang University}
  \city{Hangzhou}
  \country{China}}

\author{Zhaoran Liu}
\affiliation{
  \institution{College of Control Science and Engineering, Zhejiang University}
  \city{Hangzhou}
  \country{China}}

\author{Xu Chen}
\affiliation{
  \institution{\mbox{Gaoling School of Artificial Intelligence} \mbox{Renmin University of China}}
  \city{Beijing}
  \country{China}}

\author{Haoxuan Li}
\authornote{Haoxuan Li and Zhouchen Lin are the corresponding authors.}
\affiliation{%
  \institution{Center for Data Science\\Peking University}
  \city{Beijing}
  \country{China}
}

\author{Zhouchen Lin}
\authornotemark[1]
\authornote{Zhouchen Lin is now also working at State Key Lab of
General AI \& Institute for AI, Peking University, and Pazhou Lab (Huangpu), Guangzhou, China.}
\affiliation{%
  \institution{School of Intelligence Science and Technology, Peking University}
  \city{Beijing}
  \country{China}
}

\renewcommand{\shortauthors}{Hao Wang et al.}

\begin{abstract} 
Heterogeneous treatment effect (HTE) estimation from observational data poses significant challenges due to treatment selection bias. Existing methods address this bias by minimizing distribution discrepancies between treatment groups in latent space, focusing on global alignment. However, the fruitful aspect of local proximity, where \textit{similar units exhibit similar outcomes}, is often overlooked.  In this study, we propose \textbf{Pro}ximity-enhanced  \textbf{C}ounterFactual \textbf{R}egression (CFR-Pro) to exploit proximity for enhancing representation balancing within the HTE estimation context. Specifically, we introduce a pair-wise proximity regularizer based on optimal transport to incorporate the local proximity in discrepancy calculation. However, the curse of dimensionality renders the proximity measure and discrepancy estimation ineffective—exacerbated by limited data availability for HTE estimation. To handle this problem, we further develop an informative subspace projector, which trades off minimal distance precision for improved sample complexity. Extensive experiments demonstrate that CFR-Pro accurately matches units across different treatment groups, effectively mitigates treatment selection bias, and significantly outperforms competitors. Code is available at \url{https://github.com/HowardZJU/CFR-Pro}.
\end{abstract}
\begin{CCSXML}
<ccs2012>
<concept>
<concept_id>10010147.10010257</concept_id>
<concept_desc>Computing methodologies~Machine learning</concept_desc>
<concept_significance>500</concept_significance>
</concept>
</ccs2012>
\end{CCSXML}

\ccsdesc[500]{Computing methodologies~Machine learning}

\keywords{Selection Bias, Causal Inference, Optimal Transport, Treatment Effect Estimation}

\maketitle

\section{Introduction}\label{sec:introduction}

Estimating heterogeneous treatment effect (HTE) through randomized controlled trials is fundamental in causal inference, widely applied in various domains such as healthcare~\citep{drnet,escfr}, e-commerce~\citep{uplift,wu2022opportunity,escm}, and education~\citep{cordero2018causal}. While randomized controlled trials are considered as the golden standard in HTE estimation~\citep{pearl2018book}, their availability is often limited by significant financial and ethical constraints~\cite{li2023www,liremoving,xiao2024addressing,zhou2025two}. Consequently, there is increasing reliance on observational data for HTE estimation, driven by its broader availability and the feasibility of post-marketing surveillance as a cost-effective alternative to clinical trials~\cite{wuite,DBLP:conf/icml/LiZCGLW23}.

Estimating HTE from observational data is challenging primarily due to: (1) the absence of counterfactuals, where only one potential outcome is observable; and (2) treatment selection bias, where non-random treatment assignments cause covariate shifts between treated and untreated groups, thereby affecting the generalizability of outcome estimators~\cite{wustable,wang2023out,zheng2025adaptive}. Traditional meta-learners address the counterfactual problem by segmenting HTE estimation into tasks focused on factual outcomes~\cite{kunzel2019metalearners}. However, these methods often struggle with treatment selection bias, resulting in biased HTE estimations.

Recent methods, such as counterfactual regression, have shown potential for mitigating selection bias by minimizing distribution discrepancies in the representation space~\citep{cfr, site, disentangle, mim, ace}. However, current methods for discrepancy calculation overlook two critical issues. First, they emphasize a global perspective in calculating distribution discrepancies, neglecting the local proximity between treatment units. Local proximity—where similar units likely exhibit similar outcomes—is a pivotal factor in accurate HTE estimation~\cite{knn, psm, wager2018estimation}. Ignoring this aspect can lead to misleading discrepancy estimates and consequently erroneous updates to the HTE estimator. The second challenge pertains to the curse of dimensionality, where a substantial number of units is required to reliably estimate treatment effects. Often, acquiring a sufficiently large sample of treated units is impractical in real-world settings, rendering the discrepancy estimation unreliable. Addressing these limitations is essential for advancing the precision and applicability of HTE estimations from observational data.

In this work, we propose an effective HTE estimator, namely \textbf{Pro}ximity-enhanced \textbf{C}ounterFactual \textbf{R}egression (CFR-Pro), which tackles both local proximity and dimensionality issues through a generalized optimal transport problem. Specifically, to incorporate local proximity, CFR-Pro incorporates a pairwise proximity regularizer (PPR) in the optimal transport formulation to explicitly maintain local proximity in discrepancy calculation.
To mitigate the curse of dimensionally, CFR-Pro innovatively introduces an informative subspace projector (ISP), which seeks for an informative subspace to calculate distribution discrepancy with minimal precision loss. 
The architecture and computation workflow of CFR-Pro are detailed in Section~\ref{sec:archi}. Extensive experimental results demonstrate that CFR-Pro accurately matches units across varying treatment groups, effectively mitigates treatment selection bias, and significantly outperforms its competitors.

\textbf{Contributions.} The contributions are summarized as follows.
\begin{itemize}[leftmargin=*]
    \item We innovatively investigate the local proximity preservation and curse of dimensionality issues for causal balancing, which historically limits the performance of HTE estimation based on representation learning.
    \item We propose CFR-Pro, a streamelined approach that employs a pairwise proximity regularizer and an informative subspace projector within a unified optimal transport framework to overcome the above issues.
    \item Through comprehensive experiments on open benchmarks, we demonstrate that CFR-Pro outperforms a range of competitors. We further substantiate its effectiveness via extensive hyperparameter tuning and ablative studies.
\end{itemize}


\section{Preliminaries}\label{sec:preliminary}
\begin{figure*}
\centering
\subfigure[Mitigating selection bias with $\psi(\cdot)$.]{\includegraphics[width=0.38\linewidth]{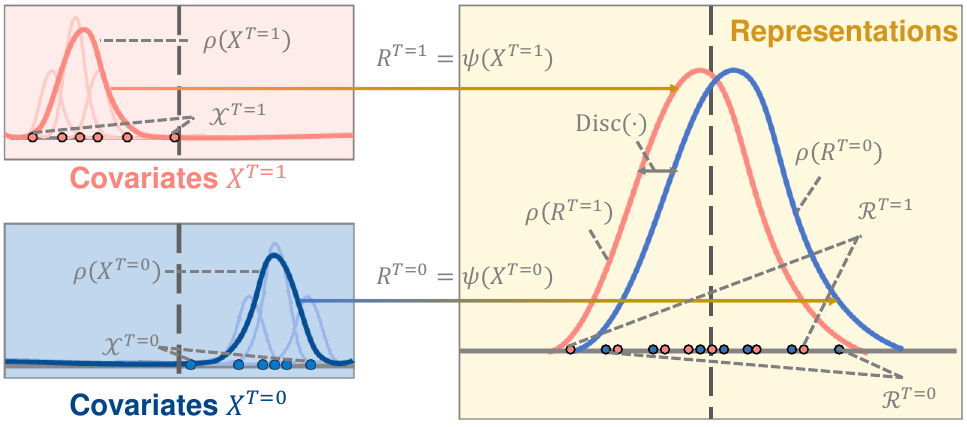}\label{fig:problem}}\quad
\subfigure[Overall architecture of CFR-Pro.]{\includegraphics[width=0.55\linewidth]{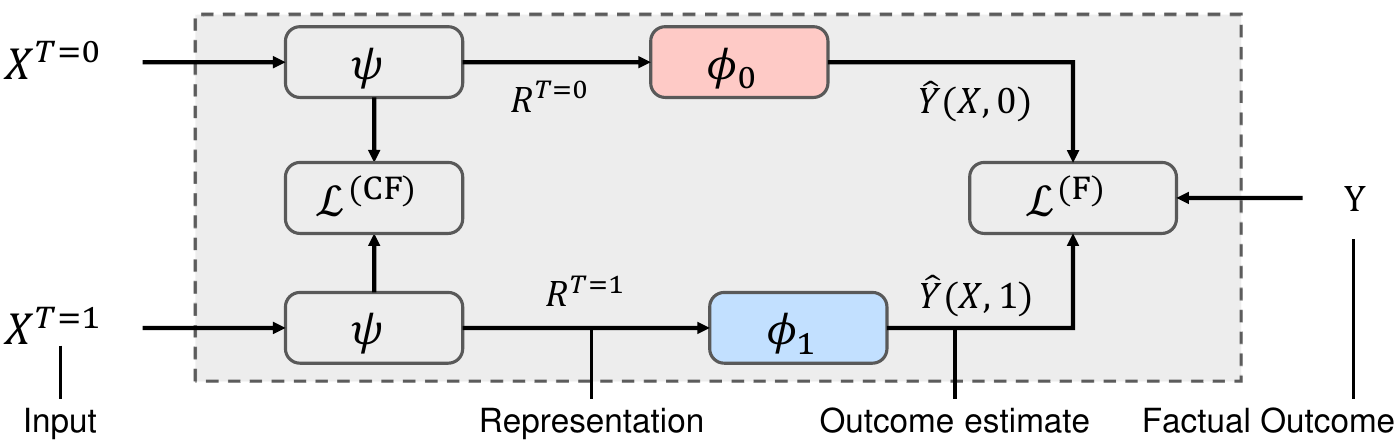}\label{fig:structure}}
\caption{Overview of handling treatment selection bias with CFR-Pro. The red and blue colors signify the treated and untreated groups, respectively.
(a) The treatment selection bias is illustrated through a distribution shift between treated ($X_1$) and untreated ($X_0$) units. The curves and scatters indicate the probability density functions and associated empirical distributions, respectively.
(b) CFR-Pro reduces selection bias by aligning units from both treatment groups within a common representation space, denoted as $R = \psi(X)$. This alignment facilitates the generalization of the outcome mappings $\phi_1$ and $\phi_0$ across different groups.}
\end{figure*}

\subsection{HTE estimation with observational data}\label{sec:causal}
This section outlines the HTE estimation task within the potential outcome framework~\citep{pof} and the challenge of treatment selection bias. The fundamental encapsulated in Definition~\ref{def:rv}\footnote{We use uppercase letters, e.g., $X$, to denote a random variable, and lowercase letters, e.g., $x$, to denote a specific value. Letters in calligraphic font, e.g., $\mathcal{X}$, represent the empirical distribution, and $\mathbb{P}()$ represents the probability distribution function, e.g., $\mathbb{P}(X)$.}. Specifically, a unit characterized by covariates $x$ possesses two potential outcomes: $Y_1$ if treated and $Y_0$ if untreated. The expected difference between these potential outcomes given covariates, represented as $\tau(x) = \mathbb{E}[Y_1 - Y_0 \mid x]$, is termed as the conditional average treatment effect (CATE), and its expectation over all units is termed as the average treatment effect (ATE).

\begin{definition}\label{def:rv}
Suppose $X$, $R$, $Y$, and $T$ are random variables with probability density function $\rho_*$ and support $\mathcal{S}_*$. Typically, $X$ represents covariates with the probability density function $\rho_X$ and support $\mathcal{S}_X=\{0,1\}$. $R$ represents induced representations, $Y$ denotes outcomes, and $T$ denotes treatment indicators. 
\end{definition}

\begin{definition}\label{def:map}
    Suppose $\psi:\mathcal{S}_X\rightarrow\mathcal{S}_R$ is a representation mapping with $R=\psi(X)$. Define $\phi_T:\mathcal{R}\times\mathcal{T}\rightarrow\mathcal{Y}$ as an outcome mapping that maps the representations and treatment to the corresponding factual outcome: $Y_1=\phi_1(R)$ and $Y_0=\phi_0(R)$.
\end{definition}

\begin{definition}\label{def:empirical}
    Suppose $\mathcal{X}$, $\mathcal{R}$, and $\mathcal{Y}$ are the empirical distributions of $X$, $R$, and $Y$ at a minibatch level, respectively. Let $\mathcal{X}^{T=1}$ and $\mathcal{X}^{T=0}$ be the covariates of treated and untreated units, respectively, with $\mathcal{R}_{\psi}^{T=1}$ and $\mathcal{R}_{\psi}^{T=0}$ as the corresponding representations induced by $R=\psi(X)$.
\end{definition}

The estimation of CATE is the cornerstone in HTE estimation. Since one of these outcomes is always unobserved in a dataset, effective CATE estimation typically involves decomposing it into factual outcome estimation subproblems solvable with supervised regression methods~\citep{kunzel2019metalearners}.
An exemplary approach TARNet~\citep{cfr} employs the representation mapping $\psi$ and outcome mapping $\phi$ from Definition~\ref{def:map}, which estimates the CATE as
\begin{equation}\begin{aligned}
    \hat{\tau}_{\psi,\phi}(x)&=\hat{Y}(x,1)-\hat{Y}(x,0),\\
    \hat{Y}(x,1)&=\phi_1(\psi(x)), \quad \hat{Y}(x,0)=\phi_0(\psi(x)),
\end{aligned}\end{equation}
where $\psi$ is trained across all units, while $\phi_1$ and $\phi_0$ are trained separately on treated and untreated groups. The training objective is to minimize the factual outcome estimation error:
\begin{equation}\label{eq:factual}
    \mathcal{L}^{(\mathrm{F})}(\psi,\phi):=\sum_{i=1}^\mathrm{N}\left\|\phi_1(\mathcal{R}_{\psi,i}^{T=1})-\mathcal{Y}_i^{T=1}\right\|_2^2+    \sum_{j=1}^\mathrm{M}\left\|\phi_0(\mathcal{R}_{\psi,j}^{T=0})-\mathcal{Y}_j^{T=0}\right\|_2^2,
\end{equation}
where $\mathcal{R}_\psi$ and $\mathcal{Y}$ are the empirical distributions of representations and outcomes as defined in Definition~\ref{def:empirical}, and $i$ and $j$ are sample indices in the associated empirical distribution. CATE estimators are evaluated using the precision in estimation of heterogeneous effect (PEHE) metric:
\begin{equation}\label{eq:pehe}
    \epehe(\psi,\phi):=\int\left(\hat{\tau}_{\psi,\phi}(x)-\tau(x)\right)^{2} {\color{black}\rho(x)}\,dx.
\end{equation}

\paragraph{Selection bias.} As illustrated in Figure~\ref{fig:problem}, treatment selection bias introduces a distribution shift of covariates between groups, which causes $\phi_1$ and $\phi_0$ to overfit to their respective group's characteristics and generalize poorly across the entire population. To mitigate selection bias, seminal works starting from CFR~\cite{cfr} augment the learning objective with a \textit{distribution discrepancy} term as $\mathcal{L}^{(\mathrm{F})}+\mathrm{Disc}(\mathcal{R}_{\psi}^{T=1}, \mathcal{R}_{\psi}^{T=0})$. This adjustment reduces distribution shift in the representation space, thereby enabling $\phi_1$ and $\phi_0$ to generalize to both treated and untreated groups.

\subsection{Local proximity for HTE estimation}

Local proximity, quantified as the mutual distance between units within a distribution, encapsulates the geometric properties of distributions.  The assumption that \textit{\underline{similar units have similar outcomes}}~\cite{site,ace} highlights its critical role in HTE estimation. This principle is central to various HTE estimators—such as matching techniques (e.g., KNN~\cite{knn}, propensity score matching~\cite{psm}) and stratification methods~\cite{wager2018estimation}—that leverage proximity to enhance estimation accuracy.
Despite its acknowledged importance, modern HTE approaches, particularly those based on the Counterfactual Representation (CFR) paradigm, primarily focus on minimizing a global discrepancy metric, denoted as  $\mathrm{Disc}(\cdot)$, while neglecting the nuances of local proximity that can be pivotal for precise causal inference.

One notable exception is the SITE model~\cite{site}, which incorporates the PDDM metric~\cite{pddm} to measure proximity. SITE differs fundamentally from our work in two respects. Firstly, SITE employs PDDM merely to align latent and covariate spaces while using a simplistic middle-point distance as its discrepancy measure, thereby not integrating local proximity into the discrepancy for balancing treated and untreated samples. In contrast, we focus on enhancing the discrepancy by comprehensively incorporating local proximity. Secondly, SITE quantifies proximity using only six pre-selected anchor units, which may limit its ability to capture broader contextual information. Our approach transcends this limitation by leveraging an optimal transport methodology to construct a more inclusive and nuanced representation of local proximity.

\subsection{Discrete optimal transport}\label{sec:ot}
Optimal transport (OT) quantifies distribution discrepancy as the minimum transport cost~\cite{otda,pswi,wang2025tois}, offering a tool to quantify the treatment selection bias in Figure~\ref{fig:problem}.
An applicable formulation proposed by \citet{kantorovich2006translocation} is present in Definition~\ref{def:kanto}, which can be seen as a linear programming problem.

\begin{definition}\label{def:kanto}
    For empirical distributions $\alpha$ and $\beta$ with n and m units, respectively, the Kantorovich problem aims to find a feasible plan $\pi\in\mathbb{R}_{+}^{n\times m}$ which transports $\alpha$ to $\beta$ at the minimum cost:
    \begin{equation}\label{eq:kanto}
    \begin{aligned}
        \mathbb{W}(\alpha,\beta)&:=\min_{\bpi\in\Pi(\alpha,\beta)}\left<\mathbf{D},\mathbf{\bpi}\right>,\;\\
        \Pi(\alpha,\beta)&:=\left\{\mathbf{\bpi}\in\mathbb{R}_{+}^{n\times m}: \bpi\mathbf{1}_m=\mathbf{a},\bpi^\mathrm{T}\mathbf{1}_n=\mathbf{b}\right\},
    \end{aligned}  
    \end{equation}
    where $\mathbb{W}(\alpha,\beta)\in\mathbb{R}$ is the OT discrepancy;
    $\mathbf{D}\in\mathbb{R}_{+}^{n\times m}$ is the unit-wise Euclidean distance between $\alpha$ and $\beta$;
    $\mathbf{a}$ and $\mathbf{b}$ are the mass of units in $\alpha$ and $\beta$, respectively;
    $\Pi$ is the feasible plan set where the mass-preserving constraint holds.
\end{definition}


\section{Proposed method}\label{sec:proposed}

In this section, we present the Proximity-enhanced CounterFactual Regression (CFR-Pro) approach, which leverages OT to tackle the treatment selection bias.
We first illustrate the pair-wise proximity regularizer (PPR) for measuring and maintaining local proximity in different treatment groups, and demonstrate its efficacy for improving HTE estimation. 
Subsequently, we propose an informative subspace projector (ISP) to reduce the sampling complexity and handle the curse of dimensionality in calculating OT. We finally open a new thread to summarize the model architecture, learning objectives, and optimization algorithm.

\subsection{Pair-wise proximity regularizer for the preservation of local proximity}\label{sec:sot}
To mitigate treatment selection bias, representation-based methods align treated and untreated groups in the representation space, the core of which is the quantification of the distribution discrepancy $\mathrm{Disc}(\cdot)$ between treatment groups.
It is plausible to quantify the discrepancy with OT due to its numerical advantages and flexibility over competitors~\cite{escfr}.
However, standard OT overlooks local proximity, a crucial aspect in HTE estimation. The treated and untreated units with similar neighbors for instance should have a higher probability of matching together since similar units have similar outcomes~\citep{psm,wager2018estimation}. 

An extension of OT that encodes local proximity is the Gromov-Wasserstein measure, primarily applied to matching objects with geometric structures~\cite{gromov-object,gromov-graph}.
On the basis, inspired by~\citep{fgw}, the PPR fuses the Gromov Wasserstein measure and restates the transport problem in the representation space as:
\small{\begin{equation}\label{eq:fgw}
        \mathbb{F}(\mathcal{R}_\psi^{T=0}, \mathcal{R}_\psi^{T=1}):=\min_{\bpi\in\Pi(\alpha,\beta)} \left( \kappa \langle \bpi, \mathbf{D}\rangle+ (1-\kappa) \sum_{i, j, k, l}\mathbf{P}_{i,j,k,l} \bpi_{i, j} \bpi_{k, l}\right),
\end{equation}}\normalsize
where $0\leq\kappa\leq1$ controls the relative strength. The first term, following the standard OT formulation in~\eqref{eq:kanto}, measures the global discrepancy between treatment groups with $\mathbf{D}_{i,j}=\left\|\mathcal{R}_{\psi,i}^{T=0}-\mathcal{R}_{\psi,j}^{T=1}\right\|_2^2$.
The second term measures local proximity within each treatment group as $\mathbf{D}_{i,j}^{t}=\left\|\mathcal{R}_{\psi,i}^{T=t}- \mathcal{R}_{\psi,j}^{T=t}\right\|_2^2$, and incorporates such local proximity via $\mathbf{P}_{i,j,k,l}=\left\|\mathbf{D}_{i,k}^{T=0}- \mathbf{D}_{j, l}^{T=1}\right\|_2$. 
Specifically, if the distance between $\mathcal{R}_{\psi,i}^{T=0}$ and $\mathcal{R}_{\psi,k}^{T=0}$ is close to that between $\mathcal{R}_{\psi,j}^{T=1}$ and $\mathcal{R}_{\psi,l}^{T=1}$ (i.e., $\left\|\mathbf{D}_{i,k}^{T=0}- \mathbf{D}_{j, l}^{T=1}\right\|_2^2 \rightarrow 0$), a higher volume of mass will be matched, indicated by a larger $\bpi_{i, j} \bpi_{k, l}$. Conversely, less mass will be transported.  The derived transport plan encourages matching units with similar neighbors, preserving local proximity. Therefore, $\mathbb{F}$ quantifies the discrepancy between treatment groups while accommodating the preservation of local proximity.

\begin{figure}
\centering
\begin{center}
\centerline{\includegraphics[width=\linewidth]{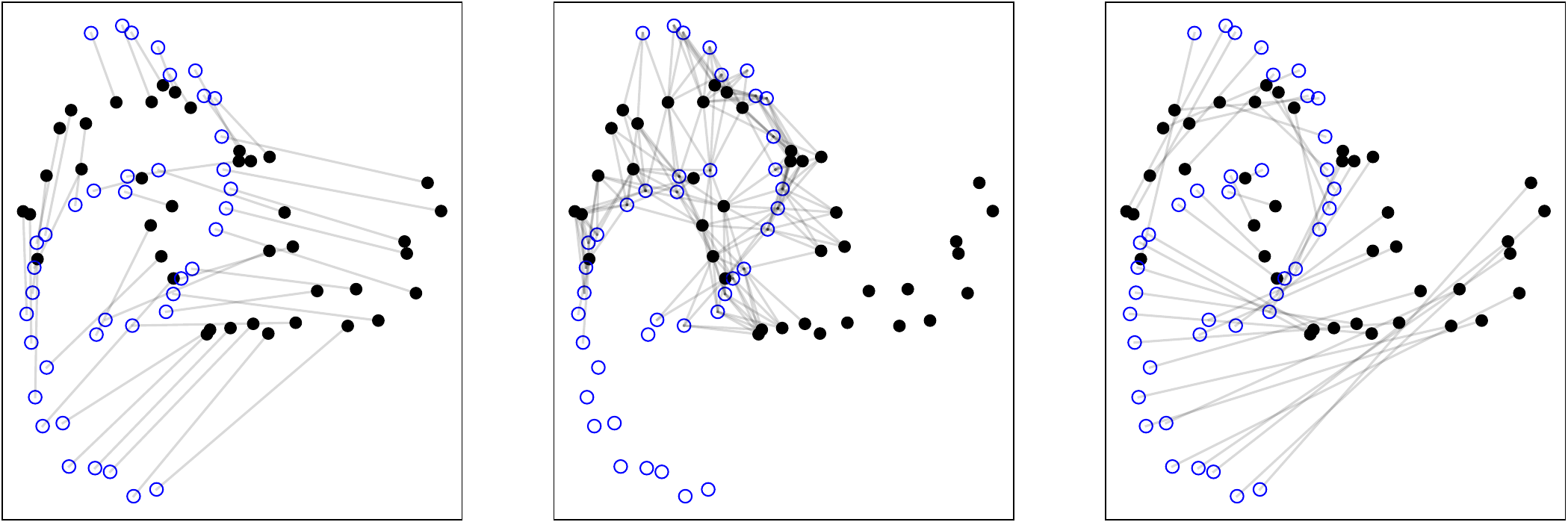}}
\centerline{\includegraphics[width=\linewidth]{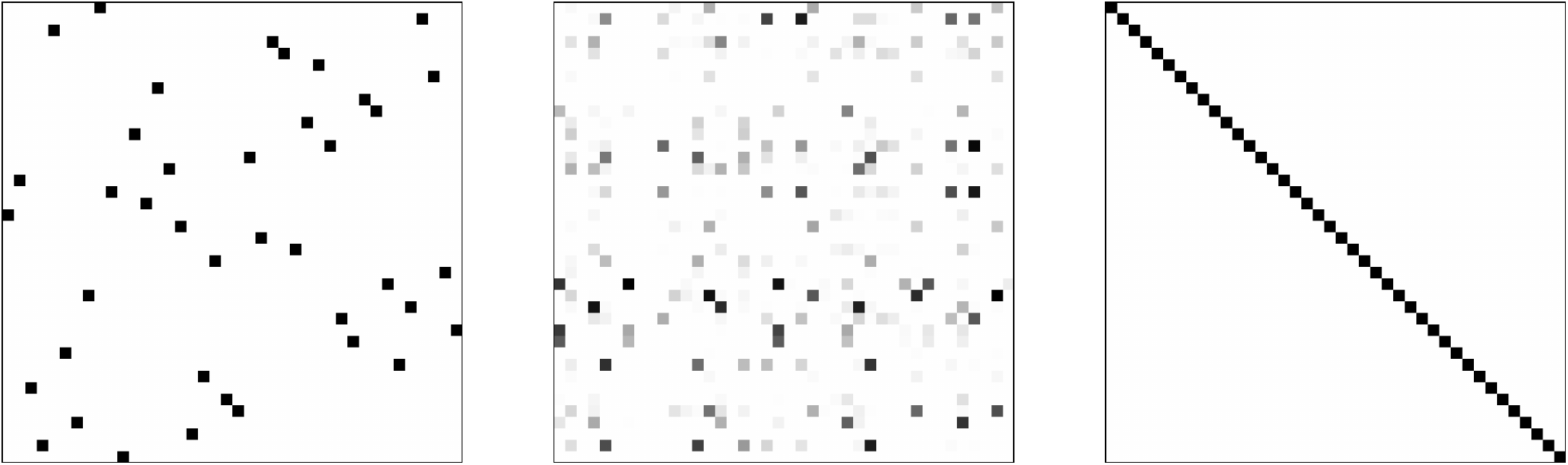}}
\caption{Overview of the transport strategies in three HTE estimators: CFR~\cite{cfr} (left), ESCFR~\cite{escfr} (center) and Ours (right). The upper panels visualize the sample locations of two different treatment groups, where different scatter colors indicate different treatments. The generated transport strategies are marked with gray lines. The down panels elaborate on the generated transport strategies by visualizing the transport matrices $\bpi$. The darkness indicates the mass transported.}
\label{fig:vis}
\end{center}
\end{figure}

\textbf{Case study.} To showcase the importance of preserving local proximity, a toy dataset is provided in Figure~\ref{fig:vis}. The untreated group units are simulated from a two-moon distribution (black circles) and treated group units are generated by rotating these positions 90 degrees (blue circles) and adding a small horizontal shift. This setup mimics shifts in both mean and covariance across treatment groups. The ideal transport strategy should align pre- and post-rotation samples, manifested by a primarily diagonal transport matrix.
We compare the strategies used in CFR~\cite{cfr}, ESCFR~\cite{escfr} and the PPR-enhanced CFR. Key observations are presented below.
\begin{itemize}[leftmargin=*]
    \item The canonical Wasserstein discrepancy in CFR~\cite{cfr} targets global alignment between groups. However, it could cause erroneous matching as it does not account for local proximity, which that can mislead the update of HTE estimators.
    \item The unbalanced Wasserstein discrepancy in ESCFR~\cite{escfr} discards skewed samples and concentrates on aligning units that overlap between treatment groups. Despite improvements, this strategy can still produce a noisy and blurred transportation of units.
    \item Our method integrates PPR to exploit local proximity, effectively rectifying the transport strategy and ensuring precise matching of all samples. This property is essential for generating accurate gradient signals to update representation mappings.
\end{itemize}

\textbf{Theoretical investigation.} Although PPR has shown practical efficacy, questions remain regarding its contribution to minimizing PEHE. Theorem~\ref{thm:bound} (refer to Appendix~\ref{apdx_thm} for proof) offers a theoretical bound, which demonstrates that PEHE can be optimized by minimizing the estimation error of factual outcomes and the group discrepancy with PPR term. Notably, the integration of PPR slightly expands the theoretical bound compared to that of canonical CFR, yet it is promising to trade off some tightness of the bound for the preservation of local proximity, due to its importance to produce a viable transport strategy, as demonstrated in Figure~\ref{fig:vis}.

\begin{thm}\label{thm:bound}
    Let $\psi$ and $\phi$ be the representation mapping and factual outcome mapping, respectively;
    $\hat{\mathbb{W}}_\psi$ be the group discrepancy at a mini-batch level.
    With the probability of at least $1-\delta$, we have:
    \begin{equation}\label{eq:peheBound}
    \begin{aligned}
        \epehe(\psi,\phi)
        &\leq 2[
            \epsilon^{T=1}_\mathrm{F}(\psi,\phi) + \epsilon^{T=0}_\mathrm{F}(\psi,\phi) + B_{\psi, \kappa} \mathbb{F}(\mathcal{R}_\psi^{T=0}, \mathcal{R}_\psi^{T=1})] \\
            &-4\sigma^2_{Y}+\mathcal{O}({N}^\mathrm{-\frac{2}{\mathrm{d}}}),
    \end{aligned}
    \end{equation}
    where $\epsilon^{T=1}_\mathrm{F}$ and $\epsilon^{T=0}_\mathrm{F}$ are the expected errors of factual outcome estimation,
    $N$ is the batch size, $\sigma^2_Y$ is the variance of outcomes, 
    $B_{\psi, \kappa}$ is a constant term, and $\mathcal{O}(\cdot)$ is a sampling complexity term.
\end{thm}

\subsection{Informative subspace projector for the curse of dimensionality}

The curse of dimensionality refers to the phenomenon where the Euclidean distance between data points tend to be identical~\citep{concen}.
It renders Euclidean distance difficult to model the proximity in high dimensional settings due to diminished discrimination.

From a computational view, the diminishing discrimination necessitates more samples for estimating Euclidean-based discrepancy~\cite{prw}, \textit{which is a pivotal component in state-of-the-art HTE estimators} such as MMD in \cite{cfr}, PDDM in \cite{site}, and EMD in \cite{escfr}. A well-known result states, for instance, that the sample complexity of EMD can grow exponentially with dimension~\cite{otbound}. Similarly, the sample complexity of 
$\mathbb{F}$ reaches $\mathcal{O}({N}^{-\frac{2}{d}})$, forming the complexity term in Theorem 3.1. 
Such large sample complexity necessitates many treated and untreated units to faithfully estimate the true discrepancy. However, a large number of treated units is often difficult to acquire in real-world experiments, underscoring the adverse impact of the curse of dimensionality on HTE estimation.

To handle the curse of dimensionality, a common strategy involves reducing the dimension of the computational space. A naive approach is reducing the hidden dimension of the representation mapping \(\psi\). However, it can be counterproductive as it may limit its capacity, preventing the mapping from capturing complex patterns and nonlinearities in data. In this study, we propose identifying an informative subspace and calculating pairwise distances within this subspace, thus mitigating the curse of dimensionality while preserving the full capacity of \(\psi\).
Specifically, given a projector $U\in\mathbb{R}^{d\times k}$ to transform the data from a high-dimensional space $d$ to a lower-dimensional subspace $k$. The distance between two unit representations $\mathcal{R}_{i}$ and $\mathcal{R}_{j}$ can be computed as $\mathbf{D}_{i,j}^P=\left\|\mathcal{R}_1U-\mathcal{R}_2U\right\|$. This approach alleviates the curse of dimensionality but introduces the risk of losing significant information, potentially leading to overly optimistic discrepancy estimations.

The central problem is how to find the informative subspace projector. To reduce the information loss caused by naive dimension reduction, it is feasible to determine the projector in a adversarial manner as follow:
\small{\begin{equation*}
        \min_{\bpi\in\Pi(\mathcal{R}_\psi^{T=0}, \mathcal{R}_\psi^{T=1})} \max_{UU^\top=I}\left( \kappa\cdot \langle \bpi, \mathbf{D}^U\rangle+ (1-\kappa)\cdot \sum_{i, j, k, l} \mathbf{P}_{i,j,k,l}^U \bpi_{i, j} \bpi_{k, l}\right),
\end{equation*}}\normalsize
where $\textbf{D}_{i,j}^U=\left\|\mathcal{R}_{\psi,i}^{T=0}U-\mathcal{R}_{\psi,j}^{T=1}U\right\|_2^2$ is the distance measured in the reduced k-dimensional space; $\mathbf{P}_{i,j,k,l}^U=\left\|\mathbf{D}_{i,k}^{U,T=0}- \mathbf{D}_{j, l}^{U,T=1}\right\|_2$.
However, this optimization problem proves difficult to solve~\cite{prw}. An effective compromise involves projecting the data into a k-dimensional subspace through an ISP module that maximally preserves the information, and then compute discrepancy in the subspace. Built upon this idea, the transport problem modified with the ISP module is formulated in Definition~\ref{def:robust}.

\begin{definition}\label{def:robust}
    Suppose $U^*\in\mathbb{R}^{d\times k}$ is an informative subspace projector that is obtained by:
    \begin{equation}\label{eq:dimr}
        U^*=\arg\min_{U, UU^\top=I} \left\|\mathcal{R}-\mathcal{R}UU^\top\right\|_2^2,
    \end{equation}
    where $\mathrm{P}=k/d$ denotes the ratio of dimensionality reduction. The distribution discrepancy equipped with PPR and ISP modules is formulated as \small{
    \begin{equation}
        \mathbb{P}_{\kappa, \mathrm{P}}(\psi):=\min_{\bpi\in\Pi(\mathcal{R}_\psi^{T=0}, \mathcal{R}_\psi^{T=1})} \left( \kappa\cdot \langle \bpi, \mathbf{D}^{U^*}\rangle+ (1-\kappa)\cdot \sum_{i, j, k, l} \mathbf{P}_{i,j,k,l}^{U^*} \bpi_{i, j} \bpi_{k, l}\right).
    \end{equation}}\normalsize
\end{definition}

\subsection{Overall workflow of CFR-Pro}\label{sec:archi}

The architecture of CFR-Pro is presented in Figure~\ref{fig:structure}, where the covariate $X$ is first mapped to the representations $R$ with $\psi(\cdot)$, and then to the potential outcomes with $\phi(\cdot)$.
The learning objective is to minimize the risk of factual outcome estimation and the group discrepancy.
Given mini-batch distributions $\mathcal{X}^{T=1}$ and $\mathcal{X}^{T=0}$, the risk of factual outcome estimation can be estimated as \eqref{eq:factual}. Afterwards, the group discrepancy is calculated as $\mathbb{P}_{\kappa, \mathrm{P}}(\psi)$. 
Finally, the overall learning objective of CFR-Pro is
\begin{equation}\label{eq:escfr}
    \mathcal{L}^{(\mathrm{CFR-Pro})}_{\lambda, \kappa, \mathrm{P}} := \mathcal{L}^{(\mathrm{F})}(\psi,\phi) + \lambda\cdot \mathbb{P}_{\kappa, \mathrm{P}}(\psi),
\end{equation}
where $\lambda$ controls the strength of distribution alignment,
$\kappa$ controls PPR in \eqref{eq:fgw},
and $\mathrm{P}$ controls the ratio of dimension reduction in Definition~\ref{def:robust}.
The learning objective above mitigates the selection bias following Theorem~\ref{thm:bound} while handling the issues of local proximity and curse of dimensionality.

The optimization procedure of CFR-Pro is encapsulated in Algorithm 1. 
First, we compute the latent space representations using the representation mapping $\psi$.
Second, we determine the informative subspace projector $U^*$ by solving the dimension reduction problem in \eqref{eq:dimr}, which can be solved via the well-established principal component analysis.
Then, we calculate the pair-wise distance matrix $\mathbf{D}^U$ and $\mathbf{P}^U$ in the subspace induced by $U^*$.
Subsequently, compute the distribution discrepancy term $\mathbb{P}$ by solving the OT problem in Definition~\ref{def:robust}. The solution process is available in \citep{fgw}.
Finally, we compute the overall loss in \eqref{eq:escfr} and update $\psi$ and $\phi$ with stochastic gradient methods.

\begin{algorithm}[tb]
\caption{The computation workflow of CFR-Pro.}\label{alg:escfr}
\leftline{\textbf{Input}: covariates $\mathcal{X}$; factual outcomes $\mathcal{Y}$; outcome mapping $\phi$;}
\leftline{treatments $\mathcal{T}$; representation mapping $\psi$.}

\leftline{\textbf{Parameter}: $\lambda$: strength of discrepancy alignment; $\kappa$: strength of}
\leftline{proximity preservation in PPR; $\mathrm{P}$: ratio of dimension reduction.} 
\leftline{$\mathrm{B}$: batch size}
\leftline{\textbf{Output}: $\mathcal{L}^{(\mathrm{CFR-Pro})}_{\lambda, \kappa, \mathrm{P}}$: the learning objective of CFR-Pro.}
\begin{algorithmic}[1] 
    \State
    $\mathcal{R}\gets\psi(\mathcal{X})$.
    \State
    $U^*=\arg\min_{U, UU^\top=I} \left\|\mathcal{R}-\mathcal{R}UU^\top\right\|_2^2$.
    \State
    $\mathbf{D}_{i,j}^{U^*} \gets \Vert\mathcal{R}_i^{T=0}U^*-\mathcal{R}_j^{T=1}U^*\Vert_2^2$ \quad for $1\leq i,j \leq \mathrm{B}$.
    \State
    $\mathbf{D}_{i,k}^{U^*,T=0} \gets \Vert\mathcal{R}_i^{T=0}U^*-\mathcal{R}_k^{T=0}U^*\Vert_2^2$ \quad for $1\leq i,k \leq \mathrm{B}$.
    \State
    $\mathbf{D}_{j,l}^{U^*,T=1} \gets \Vert\mathcal{R}_j^{T=1}U^*-\mathcal{R}_l^{T=1}U^*\Vert_2^2$ \quad for $1\leq j,l \leq \mathrm{B}$.
    \State
    $\mathbf{P}_{i,j,k,l}^{U^*}\gets\left\|\mathbf{D}_{i,k}^{U^*,T=0}- \mathbf{D}_{j, l}^{U^*,T=1}\right\|_2$  \quad for $1\leq i,j,k,l \leq \mathrm{B}$.
    \State Calculate
    $\mathbb{P}_{\kappa, \mathrm{P}}(\psi)$ following Eq. (8).
    \State
    $\mathcal{L}^{(\mathrm{F})}(\psi, \phi)\gets
    \Vert\phi(\mathcal{R},\mathcal{T})-\mathcal{Y}\Vert_2^2$.
    \State
    $\mathcal{L}^{(\mathrm{CFR-Pro})}_{\lambda, \kappa, \mathrm{P}} \leftarrow \mathcal{L}^{(\mathrm{F})}(\psi,\phi) + \lambda\cdot \mathbb{P}_{\kappa, \mathrm{P}}(\psi)$.
\end{algorithmic}
\end{algorithm}

\section{Experiments}\label{sec:experiments}
We validate CFR-Pro by investigating the aspects as follows:
\begin{enumerate}[leftmargin=*]
    \item \textbf{Performance:} \textit{Does CFR-Pro work?} Section \ref{sec:overall} compares CFR-Pro against established CATE estimators on real-world public benchmarks, where CFR-Pro achieves the best performance.
    \item \textbf{Efficacy:} \textit{How does it work?} Section \ref{sec:ablation} conducts an ablative study to investigate the contribution of PPR and ISP, where both components are beneficial to improve canonical CFR and cooperate well.
    \item \textbf{Sensitivity:} \textit{Is it sensitive to hyperparameters?} Section~\ref{sec:hyper1}  conducts a sensitivity study on the hyperparameters introduced by PPR and ISP, respectively, and give further insights on the rational they work.
\end{enumerate}
\subsection{Experimental setup}

\begin{table*}
\caption{Out-of-sample performance (mean±std) on the ACIC and IHDP datasets. ``*'' marks the baseline estimators that CFR-Pro outperforms significantly at p-value $<$ 0.05 over paired samples t-test.}\label{tab:result}
\renewcommand{\arraystretch}{0.9} 
\setlength{\tabcolsep}{8pt}
\begin{tabular}{lccccccccc}
\toprule
Dataset    & \multicolumn{3}{c}{ACIC}    && \multicolumn{3}{c}{IHDP} \\ \cmidrule{2-4} \cmidrule{6-8} 
Metric     & $\epsilon_\mathrm{PEHE}$ & $\epsilon_\mathrm{ATE}$ & $\epsilon_\mathrm{ATT}$ && $\epsilon_\mathrm{PEHE}$ & $\epsilon_\mathrm{ATE}$ & $\epsilon_\mathrm{ATT}$\\ \midrule
R.Forest       & 3.3908$_{\pm 0.1811}$\sig   & 0.8347$_{\pm 0.3635}$  & 0.7785$_{\pm 0.3816}$  && 4.6697$_{\pm 9.2920}$     & 0.4544$_{\pm 0.8308}$   & 0.8353$_{\pm 1.2413}$  \\
S.Learner       & 4.8835$_{\pm 0.7933}$\sig   & 3.0913$_{\pm 0.7731}$\sig  & 3.1213$_{\pm 0.6372}$\sig  && 4.7408$_{\pm 3.9688}$\sig     & 2.5785$_{\pm 1.8521}$\sig   & 2.7951$_{\pm 1.6036}$\sig  \\
T.Learner       & 4.2749$_{\pm 0.6793}$\sig   & 2.2176$_{\pm 1.2131}$\sig  & 2.3940$_{\pm 1.1964}$\sig  && 2.5257$_{\pm 3.3643}$\sig     & 0.5818$_{\pm 1.2177}$   & 0.6642$_{\pm 1.2197}$  \\
TARNet       & 3.5331$_{\pm 0.9556}$\sig   & 1.5308$_{\pm 1.0469}$\sig  & 1.6601$_{\pm 1.0670}$\sig  && 1.7781$_{\pm 3.4467}$     & 0.2814$_{\pm 0.3193}$   & 0.3338$_{\pm 0.3349}$  \\
DESCN       & 2.6420$_{\pm 0.2614}$\sig   & 0.4548$_{\pm 0.1693}$  & 0.4987$_{\pm 0.1881}$  && 4.0128$_{\pm 6.1409}$\sig     & 1.2219$_{\pm 1.7453}$   & 0.7917$_{\pm 1.3185}$  \\
\midrule
k-NN       & 5.8977$_{\pm 0.1400}$\sig   & 1.5773$_{\pm 0.3075}$\sig  & 1.9068$_{\pm 0.2870}$\sig  && 4.3191$_{\pm 7.3361}$     & 0.8316$_{\pm 1.6911}$   & 1.8118$_{\pm 3.2342}$ \\
O.Forest       & 2.7451$_{\pm 0.3379}$\sig   & 0.6003$_{\pm 0.1879}$  & 0.6597$_{\pm 0.2013}$  && 3.1888$_{\pm 5.6657}$     & 0.3150$_{\pm 0.3696}$   & 0.6539$_{\pm 0.5370}$\sig  \\
PSM       & 5.1014$_{\pm 0.2987}$\sig   & 0.6468$_{\pm 0.3478}$  & 0.6231$_{\pm 0.3465}$  && 4.6347$_{\pm 8.5748}$     & 0.2129$_{\pm 0.3362}$   & 0.9353$_{\pm 2.7094}$  \\
\midrule
CFR-MMD       & 3.8514$_{\pm 0.4558}$\sig   & 1.7379$_{\pm 0.9133}$\sig  & 1.9060$_{\pm 0.9290}$\sig  && 1.9398$_{\pm 2.9029}$     & 0.5870$_{\pm 1.2231}$   & 0.6678$_{\pm 1.2207}$  \\
CFR-WASS       & 3.3187$_{\pm 0.7622}$\sig   & 1.3581$_{\pm 1.0325}$\sig  & 1.4682$_{\pm 1.0636}$\sig  && 1.9252$_{\pm 2.9323}$     & 0.5578$_{\pm 1.2455}$   & 0.6532$_{\pm 1.2544}$  \\
SITE      & 3.4910$_{\pm 0.7799}$\sig   & 1.3425$_{\pm 1.1929}$  & 1.5443$_{\pm 1.2128}$\sig  && 1.7339$_{\pm 3.1709}$     & 0.2271$_{\pm 0.3140}$   & 0.2525$_{\pm 0.2805}$  \\
ESCFR       & 2.6780$_{\pm 0.6566}$   & 1.1468$_{\pm 0.8146}$\sig  & 1.2365$_{\pm 0.8689}$\sig  && 1.6299$_{\pm 3.0344}$     & 0.2135$_{\pm 0.3788}$   & 0.2319$_{\pm 0.2488}$  \\
\midrule
CFR-Pro       & \textbf{2.0413}$_{\pm 0.6646}$   & \textbf{0.4551}$_{\pm 0.3845}$  & \textbf{0.5034}$_{\pm 0.4221}$  && \textbf{1.4601}$_{\pm 2.6607}$     & \textbf{0.1079}$_{\pm 0.1087}$   & \textbf{0.2224}$_{\pm 0.2472}$  \\\bottomrule
\end{tabular}
\end{table*}
\begin{table*}[]
\caption{Ablation study (mean±std) on the ACIC benchmark. ``*'' marks the variants that CFR-Pro outperforms significantly at p-value $<$ 0.01 over paired samples t-test.}\label{tab:ablation}
\centering
\renewcommand{\arraystretch}{1} 
\setlength{\tabcolsep}{6.5pt}
\begin{tabular}{lllccccccccc}
\toprule
    &&& \multicolumn{3}{c}{In-sample} &  & \multicolumn{3}{c}{Out-sample} \\ \cmidrule{4-6} \cmidrule{8-10} 
Model&PPR&ISP      & $\epsilon_\mathrm{PEHE}$   & $\epsilon_\mathrm{ATE}$ & $\epsilon_\mathrm{ATT}$ && $\epsilon_\mathrm{PEHE}$   & $\epsilon_\mathrm{ATE}$ & $\epsilon_\mathrm{ATT}$  \\ \midrule
CFR&\textcolor{lightgray}{\XSolidBrush}&\textcolor{lightgray}{\XSolidBrush}   & 3.4288$_{\pm 0.3952}$\sig   & 1.1796$_{\pm 0.6443}$\sig  & 1.9186$_{\pm 0.8632}$\sig && 3.3187$_{\pm 0.7622}$\sig   & 1.3581$_{\pm 1.0325}$\sig  & 1.4682$_{\pm 1.0636}$\sig    \\
CFR$^\dagger$ & \Checkmark&\textcolor{lightgray}{\XSolidBrush}    &  2.9668$_{\pm 0.9142}$ & 0.9162$_{\pm 0.5930}$ & 1.3961$_{\pm 0.9425}$ && 2.5193$_{\pm 0.7771}$ & 0.9164$_{\pm 0.8203}$ & 1.0020$_{\pm 0.8903}$ \\

CFR$^\ddagger$ & \textcolor{lightgray}{\XSolidBrush}&\Checkmark    &  2.9341$_{\pm 0.7583}$ & 0.7825$_{\pm 0.5363}$ & 1.2473$_{\pm 0.7007}$ &&  2.5983$_{\pm 0.7378}$\sig & 0.8303$_{\pm 0.7695}$ & 0.9080$_{\pm 0.8328}$ \\
CFR-Pro & \Checkmark&\Checkmark  &   \textbf{2.6091$_{\pm 0.7673}$} & 	\textbf{0.5384$_{\pm 0.3932}$} & 	\textbf{1.0313$_{\pm 0.7206}$} &&     	\textbf{2.0413$_{\pm 0.6640}$} & 	\textbf{0.4551$_{\pm 0.3845}$} & 	\textbf{0.5034}$_{\pm 0.4221}$ \\

\bottomrule

\end{tabular}
\end{table*}

\paragraph{Datasets.}
The evaluation of PEHE is challenged by the absence of counterfactuals in observational data. To address this challenge, experiments are conducted using two semi-synthetic datasets: the Infant Health and Development Program (IHDP) and the Atlantic Causal Inference Conference (ACIC) competition data \cite{cfr,site}. The IHDP dataset evaluates the effect of specialist home visits on infants' cognitive development, comprising 747 observations with 25 covariates. The ACIC dataset, derived from the Collaborative Perinatal Project \cite{acic}, includes 4802 observations and 58 covariates. All datasets are randomly shuffled and partitioned into training, validation, and test sets in a 0.7:0.15:0.15 ratio, maintaining the same proportion of treated units across all splits to ensure numerical reliability.
To increase the distinguishability of results, we omit dataset scaling to heighten the impact of selection bias.

\paragraph{Baselines.}
The baselines can be categorized into three groups. (1) \textbf{Direct estimators}: R.Forest~\cite{wager2018estimation}, S.learner~\cite{kunzel2019metalearners}, T.learner~\cite{kunzel2019metalearners}, TARNet~\citep{cfr}, DESCN~\cite{descn}; (2) \textbf{Matching estimators}: PSM~\cite{psm}, k-NN~\cite{knn}, O.Forest~\cite{wager2018estimation}; (3) \textbf{Representation-based estimators}: CFR-MMD~\cite{cfr}, CFR-WASS~\cite{cfr}, ESCFR~\cite{escfr}, and SITE~\cite{site}.

\paragraph{Implementation details.}
CFR-Pro is implemented with a fully connected neural network architecture comprising two hidden layers with 16-16 and 32-32 nodes, respectively. It is trained using the Adam optimizer for a maximum of 400 epochs, with an early stopping patience set to 30 epochs. The learning rate and weight decay parameters are set at $1e^{-3}$ and $1e^{-4}$, respectively. Other optimization settings follow~\citet{adam}. Hyper-parameters are tuned on the validation set within the ranges in Section~\ref{sec:hyper1}, with model performance validation conducted every epoch.


\subsection{Overall performance}\label{sec:overall}

Table~\ref{tab:result} provides a comprehensive comparison of the CFR-Pro framework with various baseline methodologies. Key observations from this comparative analysis are outlined below:

\begin{itemize}[leftmargin=*]
    \item Direct estimators exhibit strong performance on the PEHE metric. Neural network-based estimators particularly excel, surpassing linear models and random forests due to their enhanced ability to capture nonlinear relationships. Among them, TARNet, which integrates the strengths of both T-learner and S-learner, achieves superior and stable performance on both datasets. However, the limitations in addressing treatment selection bias hamper their performance in certain scenarios.
    
    \item Matching methods such as PSM and O.Forest show robust capabilities in estimating average treatment effects, which contributes to their widespread adoption in policy evaluation contexts. However, their efficacy diminishes on the PEHE metric, restricting their suitability for applications requiring personalized treatments (e.g., advertising).
    
    \item Representation-based methods effectively address treatment selection bias and enhance HTE estimation performance. However, their oversight of local proximity and curse of dimensionality restricts their effectiveness in overcoming selection bias.
    \item  CFR-Pro surpasses other prevalent baselines with significant improvements across most metrics. This superiority is attributed to the innovative PPR and ISP modules. These components cooperate to enable CFR-Pro to adeptly harness the local proximity and mitigate the curse of dimensionality, which facilitates more accurate alignment of treatment groups and thereby handling of selection bias.
\end{itemize}

\subsection{Ablation study}\label{sec:ablation}
\begin{figure*}
    \centering
    \subfigure[Performance with varying $\lambda$]{\includegraphics[width=0.49\linewidth]{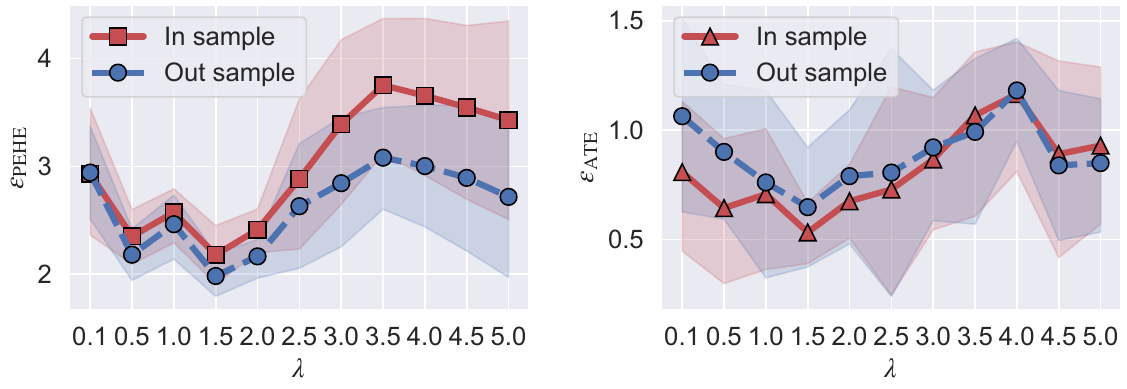}}
    \subfigure[Performance with varying $\kappa$]{\includegraphics[width=0.49\linewidth]{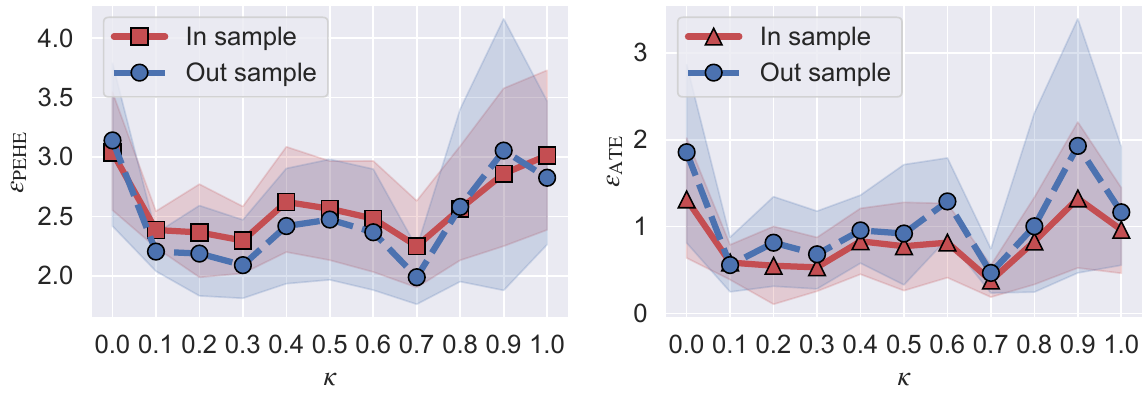}}
    \caption{Parameter sensitivity of the PPR module on the ACIC dataset, with focus on $\lambda$ and $\kappa$. The lines and shaded areas indicate the mean values and 90\% confidence intervals, respectively.}\label{fig:hparam1}
\end{figure*}
\begin{figure*}
    \centering
    \subfigure[Performance with $\kappa=0.3$.]{\includegraphics[width=0.49\linewidth]{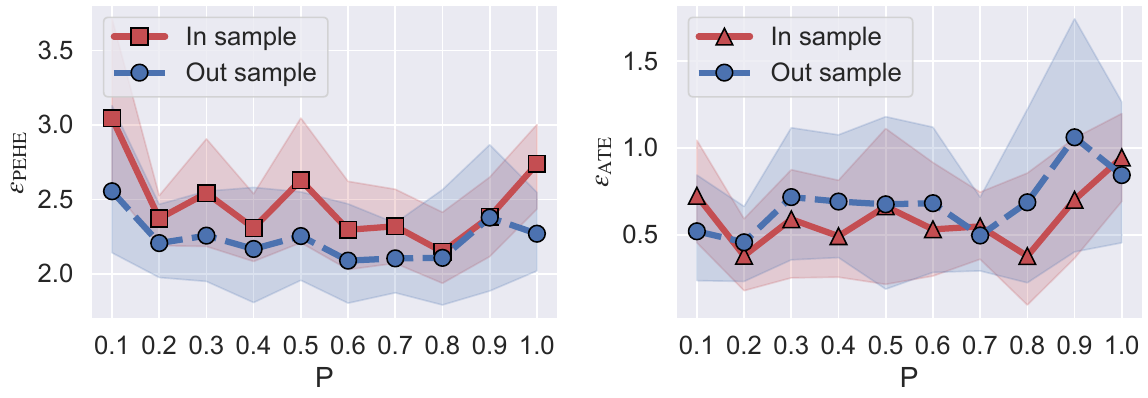}}
    \subfigure[Performance with $\kappa=0.7$.]{\includegraphics[width=0.49\linewidth]{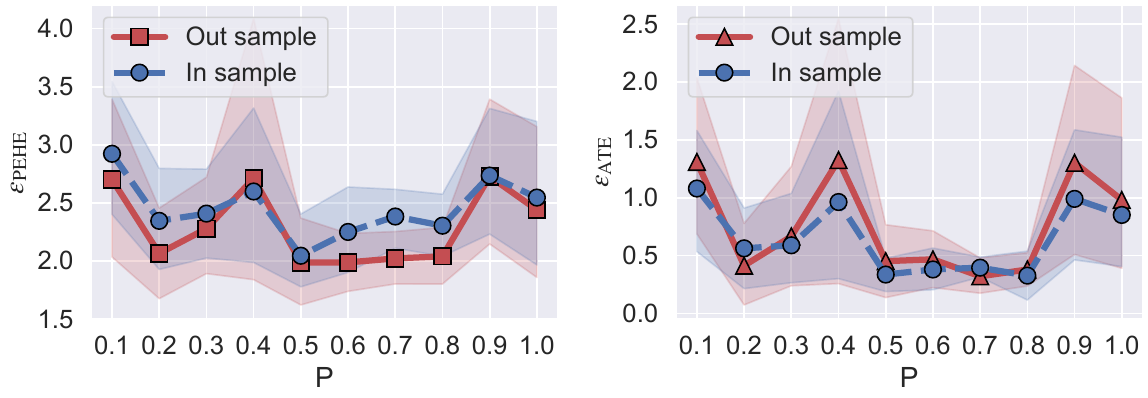}}
    \caption{Parameter sensitivity of the ISP module on the ACIC dataset, where $\mathrm{P}$ stands for the ratio of dimensionality reduction. The lines and shaded areas indicate the mean values and 90\% confidence intervals, respectively.}\label{fig:hparam2}
\end{figure*}

In Table~\ref{tab:ablation}, we examine the contributions of individual components of CFR-Pro  on the ACIC benchmark. Our study builds upon the CFR-Wass model~\cite{cfr}, a canonical approach that employs the Wasserstein discrepancy to align treatment groups within the representation space. 
\begin{itemize}[leftmargin=*]
    \item In CFR$^\dagger$, we enhance CFR by involving PPR, where the Wasserstein discrepancy $\mathbb{W}$ is replaced by the fused Gromov Wasserstein discrepancy in \eqref{eq:fgw}. A significant performance improvement is observed, where the out-of-sample $\epsilon_\mathrm{PEHE}$ decreases from 3.3187 to 2.5193, underscoring the utility of local proximity when constructing balanced representations.
    \item In CFR$^\ddagger$, we enhance CFR by incorporating ISP. Similarly, a huge performance improvement is observed. For instance, the out-of-sample $\epsilon_\mathrm{PEHE}$ decreases from 3.3187 to 2.5983, $\epsilon_\mathrm{ATE}$ decreases from 1.3581 to 0.4551. These performance improvements showcase the utility of ISP to mitigate the curse of dimensionality and generate reliable discrepancy estimates. 
    \item In CFR-Pro, we synthesize PPR and ISP into a unified framework. It maintains the advantages of each individual component and achieves the best overall performance compared to other variants. 
\end{itemize}

\subsection{In-depth analysis}\label{sec:hyper1}

\textbf{Analysis on PPR.} We analyze the performance of the PPR component focusing on its two main hyperparameters: $\lambda$ and $\kappa$, which respectively control the strength of distribution alignment and proximity preservation as detailed in \eqref{eq:fgw}. The results of a sensitivity study for these parameters are depicted in Figure~\ref{fig:hparam1}, and the key findings are summarized below.

\begin{itemize}[leftmargin=*]
    \item Increasing the value of $\lambda$ from 0.1 to 1.5 leads to a notable decrease in the out-of-sample $\epsilon_{\mathrm{PEHE}}$, from approximately 3.0 to 2.2. However, further increases in $\lambda$ result in a rise in estimation error. The phenomenon indicates that proper distribution alignment is effective to enhance the performance of HTE estimators. However,  overly emphasizing distribution balancing within a multi-task learning framework can degrade the accuracy of factual outcomes and, consequently, treatment effect estimates within a multi-task learning framework.
    \item A similar trend is observed for the strength of proximity preservation. Increasing $\kappa$ from 0 to 0.1 renders a huge performance improvement. The performance gain is consistent in a broad range from 0.1 to 0.7. However, further increasing $\kappa$ beyond this range leads to performance reduction. It results from an excessive emphasis on the local proximity in OT formulation, which reduces the focus on global discrepancy and hinders the reduction of selection bias.
\end{itemize}

\textbf{Analysis on ISP.}  The characteristics of ISP are governed by the hyperparameter \( \mathrm{P} \), which controls the extent of dimensionality reduction.
We conduct a hyperparameter study for \( \mathrm{P} \) in Figure~\ref{fig:hparam2}, and the key observations are summarized below.

\begin{itemize}[leftmargin=*]
    \item Dimensionality reduction effectively enhances model performance. Specifically, as \( \mathrm{P} \) is decreased from 1 to 0.7, there is a notable improvement in the estimation accuracy, with the out-of-sample \( \epsilon_{\mathrm{ATE}} \) diminishing from approximately 0.8 to about 0.5. This improvement is primarily due to effective handling of the curse of dimensionality, which in turn facilitates a more accurate estimation of discrepancies using minibatch samples. However, excessive reduction in dimensionality can lead to substantial information loss, and thereby suboptimal estimates. 
    \item Interestingly, there is a relationship between the weight of PPR (\( \kappa \)) and the optimal setting for \( \mathrm{P} \). As \( \kappa \) increases, which shifts the discrepancy measure \( \mathbb{P} \) closer to the Gromov-Wasserstein discrepancy, the curse of dimensionality becomes more pronounced: the Gromov term relies heavily on unit-wise distances to compute local proximity, making dimensionality reduction increasingly crucial. Consequently, the optimal \( \mathrm{P} \) value tends to decrease with larger \( \kappa \), underscoring the need to balance dimensionality reduction against the risk of significant information loss.
\end{itemize}

\section{Related works}\label{sec:relate}
\subsection{Overview of HTE estimation}
The central challenge of HTE estimation is mitigating treatment selection bias by balancing treated and untreated groups. Current solutions include three main categories: reweighting-based, matching-based, and representation-based methods~\cite{li2024debiased}.

Reweighting-based methods primarily utilize propensity scores to achieve global balance between groups, which entails the estimation of propensity scores and the construction of unbiased estimators~\cite{wustable,lirelaxing}. Propensity scores are typically estimated using logistic regression~\cite{mtlips,dual,dai2022generalized,chen2021autodebias}, with improvements via feature selection~\cite{shortreed2017outcome,wang2023out,wang2023select}, joint optimization~\cite{mtlips,zhanguser,liremoving}, and alternative training techniques~\cite{zhu2020unbiased}. The inverse propensity score method exemplifies the construction of unbiased estimator~\cite{ips}. However, it suffers from high variance at low propensity scores and bias with incorrect estimates~\cite{wangescm}. To handle these issues, doubly robust estimators and variance reduction techniques have been developed~\cite{dr,mrdr}. 

Matching-based methods construct locally balanced distributions by matching comparable units from different groups. These methods mainly differ in terms of the incorporated similarity measures. Propensity score matching for instance uses estimated propensity scores for calculating unit (dis)similarity~\cite{psm}. Tree-based methods, such as CausalForest~\cite{wager2018estimation}, can also viewed as a matching-based method employing adaptive similarity measures. Despite their effectiveness, the computational intensity of these methods restricts their scalability in large-scale applications~\cite{nnm,rnnm,wustable}.

Representation-based methods seeks for a mapping to a latent space where distributional discrepancies are minimized. Initial approaches advocate maximum mean discrepancy and vanilla Wasserstein discrepancy~\cite{bnn,cfr}. Enhancements have been made by integrating feature selection~\cite{disentangle,mim}, representation decomposition~\cite{wuite,disentangle}, and adversarial training~\cite{ganite}. However, local proximity is a critical yet scarcely investigated aspect to facilitate learning representation. A notable pioneer is SITE~\cite{site}, which employs the PDDM metric~\cite{pddm} to depict local proximity.



\subsection{Optimal transport for HTE estimation}

Recent advances in optimal transport (OT) have significantly impacted causality studies, leading to the development of innovative HTE estimators~\cite{ot4discovery}. One research direction involves using OT to enhance reweighting~\cite{otite1,yan2024exploiting,yang2024revisiting} and matching~\cite{otite4} strategies. A related work proposed by \citet{yan2024exploiting} uses Gromov discrepancy to adjust the transport matrix for HTE estimation. However, they focuses on reweighting samples within the covariate space, which contrasts with our approach that focuses on aligning representations in the latent space. Additionally, the issue of curse of dimensionality remains a limitation of \cite{yan2024exploiting} but handled in our work. These factors differentiate our work with \citet{yan2024exploiting}.

Another line of works, which are typically related to us, advocates building balanced representations with OT~\cite{otite3,causalot,escfr}. \citet{causalot} for instance use OT to align factual and counterfactual distributions; \citet{escfr} apply OT to achieve HTE estimation while minigating noise and unobserved confounding effects. Despite this progress, these approaches generally adhere to the traditional Kantorovich problem, similar to~\cite{cfr}, focusing on global alignment while often neglecting both local proximity and the curse of dimensionality. Therefore, developing OT formulations for HTE estimation remains a fruitful avenue for future research.

\section{Conclusion}
Representation learning has emerged as a pivotal approach for HTE estimation. However, current methods often overlook crucial aspects such as local proximity and the curse of dimensionality, which are essential for adequately addressing treatment selection bias. To bridge this gap, a principled approach known as CFR-Pro, based on a generalized OT problem, has been developed. Extensive experiments validate that CFR-Pro handles both problems effectively and outperforms prevalent baseline models.

There are two promising research avenues for further investigation. The first involves the integration of normalizing flows for representation mapping, since their invertibility effectively aligns with the foundational assumptions of counterfactual regression \cite{cfr}. The second avenue focuses on the practical application of our methodology to industrial contexts, specifically for bias mitigation in recommendation systems~\citep{escm}. 

\section*{Acknowledgements}
This work is supported
by National Key R\&D Program of China (2022ZD0160300) and the NSF China (No.
62276004, 623B2002).

\bibliographystyle{ACM-Reference-Format}
\balance
\bibliography{ref,causal}


\begin{thebibliography}{66}


\ifx \showCODEN    \undefined \def \showCODEN     #1{\unskip}     \fi
\ifx \showDOI      \undefined \def \showDOI       #1{#1}\fi
\ifx \showISBNx    \undefined \def \showISBNx     #1{\unskip}     \fi
\ifx \showISBNxiii \undefined \def \showISBNxiii  #1{\unskip}     \fi
\ifx \showISSN     \undefined \def \showISSN      #1{\unskip}     \fi
\ifx \showLCCN     \undefined \def \showLCCN      #1{\unskip}     \fi
\ifx \shownote     \undefined \def \shownote      #1{#1}          \fi
\ifx \showarticletitle \undefined \def \showarticletitle #1{#1}   \fi
\ifx \showURL      \undefined \def \showURL       {\relax}        \fi
\providecommand\bibfield[2]{#2}
\providecommand\bibinfo[2]{#2}
\providecommand\natexlab[1]{#1}
\providecommand\showeprint[2][]{arXiv:#2}

\bibitem[Betlei et~al\mbox{.}(2021)]%
        {uplift}
\bibfield{author}{\bibinfo{person}{Artem Betlei}, \bibinfo{person}{Eustache Diemert}, {and} \bibinfo{person}{Massih{-}Reza Amini}.} \bibinfo{year}{2021}\natexlab{}.
\newblock \showarticletitle{Uplift Modeling with Generalization Guarantees}. In \bibinfo{booktitle}{\emph{SIGKDD}}. \bibinfo{pages}{55--65}.
\newblock


\bibitem[Chang and Dy(2017)]%
        {nnm}
\bibfield{author}{\bibinfo{person}{Yale Chang} {and} \bibinfo{person}{Jennifer~G. Dy}.} \bibinfo{year}{2017}\natexlab{}.
\newblock \showarticletitle{Informative Subspace Learning for Counterfactual Inference}. In \bibinfo{booktitle}{\emph{AAAI}}. \bibinfo{pages}{1770--1776}.
\newblock


\bibitem[Charpentier et~al\mbox{.}(2023)]%
        {otite4}
\bibfield{author}{\bibinfo{person}{Arthur Charpentier}, \bibinfo{person}{Emmanuel Flachaire}, {and} \bibinfo{person}{Ewen Gallic}.} \bibinfo{year}{2023}\natexlab{}.
\newblock \showarticletitle{Optimal Transport for Counterfactual Estimation: A Method for Causal Inference}.
\newblock \bibinfo{journal}{\emph{arXiv preprint arXiv:2301.07755}} (\bibinfo{year}{2023}).
\newblock


\bibitem[Chen et~al\mbox{.}(2021)]%
        {chen2021autodebias}
\bibfield{author}{\bibinfo{person}{Jiawei Chen}, \bibinfo{person}{Hande Dong}, \bibinfo{person}{Yang Qiu}, \bibinfo{person}{Xiangnan He}, \bibinfo{person}{Xin Xin}, \bibinfo{person}{Liang Chen}, \bibinfo{person}{Guli Lin}, {and} \bibinfo{person}{Keping Yang}.} \bibinfo{year}{2021}\natexlab{}.
\newblock \showarticletitle{Autodebias: Learning to debias for recommendation}. In \bibinfo{booktitle}{\emph{SIGIR}}. \bibinfo{pages}{21--30}.
\newblock


\bibitem[Cheng et~al\mbox{.}(2022)]%
        {mim}
\bibfield{author}{\bibinfo{person}{Mingyuan Cheng}, \bibinfo{person}{Xinru Liao}, \bibinfo{person}{Quan Liu}, \bibinfo{person}{Bin Ma}, \bibinfo{person}{Jian Xu}, {and} \bibinfo{person}{Bo Zheng}.} \bibinfo{year}{2022}\natexlab{}.
\newblock \showarticletitle{Learning Disentangled Representations for Counterfactual Regression via Mutual Information Minimization}. In \bibinfo{booktitle}{\emph{SIGIR}}. \bibinfo{pages}{1802--1806}.
\newblock


\bibitem[Chizat et~al\mbox{.}(2020)]%
        {otbound}
\bibfield{author}{\bibinfo{person}{L{\'{e}}na{\"{\i}}c Chizat}, \bibinfo{person}{Pierre Roussillon}, \bibinfo{person}{Flavien L{\'{e}}ger}, \bibinfo{person}{Fran{\c{c}}ois{-}Xavier Vialard}, {and} \bibinfo{person}{Gabriel Peyr{\'{e}}}.} \bibinfo{year}{2020}\natexlab{}.
\newblock \showarticletitle{Faster Wasserstein Distance Estimation with the Sinkhorn Divergence}. In \bibinfo{booktitle}{\emph{NeurIPS}}.
\newblock


\bibitem[Cordero et~al\mbox{.}(2018)]%
        {cordero2018causal}
\bibfield{author}{\bibinfo{person}{Jos{\'e}~M Cordero}, \bibinfo{person}{V{\'\i}ctor Crist{\'o}bal}, {and} \bibinfo{person}{Daniel Sant{\'\i}n}.} \bibinfo{year}{2018}\natexlab{}.
\newblock \showarticletitle{Causal inference on education policies: A survey of empirical studies using PISA, TIMSS and PIRLS}.
\newblock \bibinfo{journal}{\emph{J. Econ. Surv.}} \bibinfo{volume}{32}, \bibinfo{number}{3} (\bibinfo{year}{2018}), \bibinfo{pages}{878--915}.
\newblock


\bibitem[Courty et~al\mbox{.}(2017)]%
        {otda}
\bibfield{author}{\bibinfo{person}{Nicolas Courty}, \bibinfo{person}{R{\'{e}}mi Flamary}, \bibinfo{person}{Devis Tuia}, {and} \bibinfo{person}{Alain Rakotomamonjy}.} \bibinfo{year}{2017}\natexlab{}.
\newblock \showarticletitle{Optimal Transport for Domain Adaptation}.
\newblock \bibinfo{journal}{\emph{{IEEE} Trans. Pattern Anal. Mach. Intell.}} \bibinfo{volume}{39}, \bibinfo{number}{9} (\bibinfo{year}{2017}), \bibinfo{pages}{1853--1865}.
\newblock


\bibitem[Crump et~al\mbox{.}(2008)]%
        {knn}
\bibfield{author}{\bibinfo{person}{Richard~K Crump}, \bibinfo{person}{V~Joseph Hotz}, \bibinfo{person}{Guido~W Imbens}, {and} \bibinfo{person}{Oscar~A Mitnik}.} \bibinfo{year}{2008}\natexlab{}.
\newblock \showarticletitle{Nonparametric tests for treatment effect heterogeneity}.
\newblock \bibinfo{journal}{\emph{Rev. Econ. Stat.}} \bibinfo{volume}{90}, \bibinfo{number}{3} (\bibinfo{year}{2008}), \bibinfo{pages}{389--405}.
\newblock


\bibitem[Dai et~al\mbox{.}(2022)]%
        {dai2022generalized}
\bibfield{author}{\bibinfo{person}{Quanyu Dai}, \bibinfo{person}{Haoxuan Li}, \bibinfo{person}{Peng Wu}, \bibinfo{person}{Zhenhua Dong}, \bibinfo{person}{Xiao-Hua Zhou}, \bibinfo{person}{Rui Zhang}, \bibinfo{person}{Rui Zhang}, {and} \bibinfo{person}{Jie Sun}.} \bibinfo{year}{2022}\natexlab{}.
\newblock \showarticletitle{A generalized doubly robust learning framework for debiasing post-click conversion rate prediction}. In \bibinfo{booktitle}{\emph{SIGKDD}}. \bibinfo{pages}{252--262}.
\newblock


\bibitem[Dunipace(2021)]%
        {otite1}
\bibfield{author}{\bibinfo{person}{Eric Dunipace}.} \bibinfo{year}{2021}\natexlab{}.
\newblock \showarticletitle{Optimal transport weights for causal inference}.
\newblock \bibinfo{journal}{\emph{CoRR}}  \bibinfo{volume}{abs/2109.01991} (\bibinfo{year}{2021}).
\newblock


\bibitem[Fran{\c{c}}ois et~al\mbox{.}(2007)]%
        {concen}
\bibfield{author}{\bibinfo{person}{Damien Fran{\c{c}}ois}, \bibinfo{person}{Vincent Wertz}, {and} \bibinfo{person}{Michel Verleysen}.} \bibinfo{year}{2007}\natexlab{}.
\newblock \showarticletitle{The Concentration of Fractional Distances}.
\newblock \bibinfo{journal}{\emph{{IEEE} Trans. Knowl. Data Eng.}} \bibinfo{volume}{19}, \bibinfo{number}{7} (\bibinfo{year}{2007}), \bibinfo{pages}{873--886}.
\newblock


\bibitem[Guo et~al\mbox{.}(2021)]%
        {mrdr}
\bibfield{author}{\bibinfo{person}{Siyuan Guo}, \bibinfo{person}{Lixin Zou}, \bibinfo{person}{Yiding Liu}, \bibinfo{person}{Wenwen Ye}, \bibinfo{person}{Suqi Cheng}, \bibinfo{person}{Shuaiqiang Wang}, \bibinfo{person}{Hechang Chen}, \bibinfo{person}{Dawei Yin}, {and} \bibinfo{person}{Yi Chang}.} \bibinfo{year}{2021}\natexlab{}.
\newblock \showarticletitle{Enhanced Doubly Robust Learning for Debiasing Post-Click Conversion Rate Estimation}. In \bibinfo{booktitle}{\emph{SIGIR}}. \bibinfo{pages}{275--284}.
\newblock


\bibitem[Hassanpour and Greiner(2020)]%
        {disentangle}
\bibfield{author}{\bibinfo{person}{Negar Hassanpour} {and} \bibinfo{person}{Russell Greiner}.} \bibinfo{year}{2020}\natexlab{}.
\newblock \showarticletitle{Learning Disentangled Representations for CounterFactual Regression}. In \bibinfo{booktitle}{\emph{ICLR}}.
\newblock


\bibitem[Huang et~al\mbox{.}(2016)]%
        {pddm}
\bibfield{author}{\bibinfo{person}{Chen Huang}, \bibinfo{person}{Chen~Change Loy}, {and} \bibinfo{person}{Xiaoou Tang}.} \bibinfo{year}{2016}\natexlab{}.
\newblock \showarticletitle{Local similarity-aware deep feature embedding}.
\newblock \bibinfo{journal}{\emph{NeurIPS}}  \bibinfo{volume}{29} (\bibinfo{year}{2016}).
\newblock


\bibitem[Johansson et~al\mbox{.}(2016)]%
        {bnn}
\bibfield{author}{\bibinfo{person}{Fredrik~D. Johansson}, \bibinfo{person}{Uri Shalit}, {and} \bibinfo{person}{David~A. Sontag}.} \bibinfo{year}{2016}\natexlab{}.
\newblock \showarticletitle{Learning Representations for Counterfactual Inference}. In \bibinfo{booktitle}{\emph{ICML}}. \bibinfo{pages}{3020--3029}.
\newblock


\bibitem[Kantorovich(2006)]%
        {kantorovich2006translocation}
\bibfield{author}{\bibinfo{person}{Leonid~V Kantorovich}.} \bibinfo{year}{2006}\natexlab{}.
\newblock \showarticletitle{On the translocation of masses}.
\newblock \bibinfo{journal}{\emph{J. Math. Sci.}} \bibinfo{volume}{133}, \bibinfo{number}{4} (\bibinfo{year}{2006}), \bibinfo{pages}{1381--1382}.
\newblock


\bibitem[Kingma and Ba(2015)]%
        {adam}
\bibfield{author}{\bibinfo{person}{Diederik~P. Kingma} {and} \bibinfo{person}{Jimmy Ba}.} \bibinfo{year}{2015}\natexlab{}.
\newblock \showarticletitle{Adam: {A} Method for Stochastic Optimization}. In \bibinfo{booktitle}{\emph{ICLR}}.
\newblock


\bibitem[Künzel et~al\mbox{.}(2019)]%
        {kunzel2019metalearners}
\bibfield{author}{\bibinfo{person}{Sören~Reinhold Künzel}, \bibinfo{person}{Jasjeet Sekhon}, \bibinfo{person}{Peter Bickel}, {and} \bibinfo{person}{Bin Yu}.} \bibinfo{year}{2019}\natexlab{}.
\newblock \showarticletitle{Metalearners for estimating heterogeneous treatment effects using machine learning}.
\newblock \bibinfo{journal}{\emph{Proc. Natl. Acad. Sci. U. S. A.}} \bibinfo{volume}{116}, \bibinfo{number}{10} (\bibinfo{year}{2019}), \bibinfo{pages}{4156--4165}.
\newblock


\bibitem[Lee et~al\mbox{.}(2021)]%
        {dual}
\bibfield{author}{\bibinfo{person}{Jae{-}woong Lee}, \bibinfo{person}{Seongmin Park}, {and} \bibinfo{person}{Jongwuk Lee}.} \bibinfo{year}{2021}\natexlab{}.
\newblock \showarticletitle{Dual Unbiased Recommender Learning for Implicit Feedback}. In \bibinfo{booktitle}{\emph{SIGIR}}. \bibinfo{pages}{1647--1651}.
\newblock


\bibitem[Li et~al\mbox{.}(2024a)]%
        {liremoving}
\bibfield{author}{\bibinfo{person}{Haoxuan Li}, \bibinfo{person}{Kunhan Wu}, \bibinfo{person}{Chunyuan Zheng}, \bibinfo{person}{Yanghao Xiao}, \bibinfo{person}{Hao Wang}, \bibinfo{person}{Zhi Geng}, \bibinfo{person}{Fuli Feng}, \bibinfo{person}{Xiangnan He}, {and} \bibinfo{person}{Peng Wu}.} \bibinfo{year}{2024}\natexlab{a}.
\newblock \showarticletitle{Removing hidden confounding in recommendation: a unified multi-task learning approach}.
\newblock \bibinfo{journal}{\emph{NeurIPS}}  \bibinfo{volume}{36} (\bibinfo{year}{2024}).
\newblock


\bibitem[Li et~al\mbox{.}(2022b)]%
        {li2023www}
\bibfield{author}{\bibinfo{person}{Haoxuan Li}, \bibinfo{person}{Yanghao Xiao}, \bibinfo{person}{Chunyuan Zheng}, {and} \bibinfo{person}{Peng Wu}.} \bibinfo{year}{2022}\natexlab{b}.
\newblock \showarticletitle{Balancing Unobserved Confounding with a Few Unbiased Ratings in Debiased Recommendations}. In \bibinfo{booktitle}{\emph{WWW}}. \bibinfo{pages}{305--315}.
\newblock


\bibitem[Li et~al\mbox{.}(2023b)]%
        {DBLP:conf/icml/LiZCGLW23}
\bibfield{author}{\bibinfo{person}{Haoxuan Li}, \bibinfo{person}{Chunyuan Zheng}, \bibinfo{person}{Yixiao Cao}, \bibinfo{person}{Zhi Geng}, \bibinfo{person}{Yue Liu}, {and} \bibinfo{person}{Peng Wu}.} \bibinfo{year}{2023}\natexlab{b}.
\newblock \showarticletitle{Trustworthy Policy Learning under the Counterfactual No-Harm Criterion}. In \bibinfo{booktitle}{\emph{ICML}}, Vol.~\bibinfo{volume}{202}. \bibinfo{pages}{20575--20598}.
\newblock


\bibitem[Li et~al\mbox{.}({[n.\,d.]})]%
        {lirelaxing}
\bibfield{author}{\bibinfo{person}{Haoxuan Li}, \bibinfo{person}{Chunyuan Zheng}, \bibinfo{person}{Shuyi Wang}, \bibinfo{person}{Kunhan Wu}, \bibinfo{person}{Eric Wang}, \bibinfo{person}{Peng Wu}, \bibinfo{person}{Zhi Geng}, \bibinfo{person}{Xu Chen}, {and} \bibinfo{person}{Xiao-Hua Zhou}.} \bibinfo{year}{[n.\,d.]}\natexlab{}.
\newblock \showarticletitle{Relaxing the Accurate Imputation Assumption in Doubly Robust Learning for Debiased Collaborative Filtering}. In \bibinfo{booktitle}{\emph{ICML}}.
\newblock


\bibitem[Li et~al\mbox{.}(2024b)]%
        {li2024debiased}
\bibfield{author}{\bibinfo{person}{Haoxuan Li}, \bibinfo{person}{Chunyuan Zheng}, \bibinfo{person}{Wenjie Wang}, \bibinfo{person}{Hao Wang}, \bibinfo{person}{Fuli Feng}, {and} \bibinfo{person}{Xiao-Hua Zhou}.} \bibinfo{year}{2024}\natexlab{b}.
\newblock \showarticletitle{Debiased Recommendation with Noisy Feedback}. In \bibinfo{booktitle}{\emph{SIGKDD}}.
\newblock


\bibitem[Li et~al\mbox{.}(2022a)]%
        {otite3}
\bibfield{author}{\bibinfo{person}{Qian Li}, \bibinfo{person}{Zhichao Wang}, \bibinfo{person}{Shaowu Liu}, \bibinfo{person}{Gang Li}, {and} \bibinfo{person}{Guandong Xu}.} \bibinfo{year}{2022}\natexlab{a}.
\newblock \showarticletitle{Deep treatment-adaptive network for causal inference}.
\newblock \bibinfo{journal}{\emph{VLDBJ}} (\bibinfo{year}{2022}), \bibinfo{pages}{1--16}.
\newblock


\bibitem[Li et~al\mbox{.}(2023a)]%
        {causalot}
\bibfield{author}{\bibinfo{person}{Qian Li}, \bibinfo{person}{Zhichao Wang}, \bibinfo{person}{Shaowu Liu}, \bibinfo{person}{Gang Li}, {and} \bibinfo{person}{Guandong Xu}.} \bibinfo{year}{2023}\natexlab{a}.
\newblock \showarticletitle{Causal Optimal Transport for Treatment Effect Estimation}.
\newblock \bibinfo{journal}{\emph{{IEEE} Trans. Neural Networks Learn. Syst.}} \bibinfo{volume}{34}, \bibinfo{number}{8} (\bibinfo{year}{2023}), \bibinfo{pages}{4083--4095}.
\newblock


\bibitem[Li et~al\mbox{.}(2016)]%
        {rnnm}
\bibfield{author}{\bibinfo{person}{Sheng Li}, \bibinfo{person}{Nikos Vlassis}, \bibinfo{person}{Jaya Kawale}, {and} \bibinfo{person}{Yun Fu}.} \bibinfo{year}{2016}\natexlab{}.
\newblock \showarticletitle{Matching via Dimensionality Reduction for Estimation of Treatment Effects in Digital Marketing Campaigns}. In \bibinfo{booktitle}{\emph{IJCAI}}. \bibinfo{pages}{3768--3774}.
\newblock


\bibitem[Lin et~al\mbox{.}(2020)]%
        {prw}
\bibfield{author}{\bibinfo{person}{Tianyi Lin}, \bibinfo{person}{Chenyou Fan}, \bibinfo{person}{Nhat Ho}, \bibinfo{person}{Marco Cuturi}, {and} \bibinfo{person}{Michael Jordan}.} \bibinfo{year}{2020}\natexlab{}.
\newblock \showarticletitle{Projection robust Wasserstein distance and Riemannian optimization}.
\newblock \bibinfo{journal}{\emph{NeurIPS}}  \bibinfo{volume}{33} (\bibinfo{year}{2020}), \bibinfo{pages}{9383--9397}.
\newblock


\bibitem[M{\'e}moli(2011)]%
        {gromov-object}
\bibfield{author}{\bibinfo{person}{Facundo M{\'e}moli}.} \bibinfo{year}{2011}\natexlab{}.
\newblock \showarticletitle{Gromov--Wasserstein distances and the metric approach to object matching}.
\newblock \bibinfo{journal}{\emph{Foundations of computational mathematics}}  \bibinfo{volume}{11} (\bibinfo{year}{2011}), \bibinfo{pages}{417--487}.
\newblock


\bibitem[Niswander and Gordon(1972)]%
        {acic}
\bibfield{author}{\bibinfo{person}{Kenneth~R Niswander} {and} \bibinfo{person}{Myron Gordon}.} \bibinfo{year}{1972}\natexlab{}.
\newblock \bibinfo{booktitle}{\emph{The women and their pregnancies: the Collaborative Perinatal Study of the National Institute of Neurological Diseases and Stroke}}. Vol.~\bibinfo{volume}{73}.
\newblock \bibinfo{publisher}{National Institute of Health}.
\newblock


\bibitem[Pearl and Mackenzie(2018)]%
        {pearl2018book}
\bibfield{author}{\bibinfo{person}{Judea Pearl} {and} \bibinfo{person}{Dana Mackenzie}.} \bibinfo{year}{2018}\natexlab{}.
\newblock \bibinfo{booktitle}{\emph{The book of why: the new science of cause and effect}}.
\newblock \bibinfo{publisher}{Basic books}.
\newblock


\bibitem[Robins et~al\mbox{.}(1994)]%
        {dr}
\bibfield{author}{\bibinfo{person}{James~M Robins}, \bibinfo{person}{Andrea Rotnitzky}, {and} \bibinfo{person}{Lue~Ping Zhao}.} \bibinfo{year}{1994}\natexlab{}.
\newblock \showarticletitle{Estimation of regression coefficients when some regressors are not always observed}.
\newblock \bibinfo{journal}{\emph{J. Am. Stat. Assoc.}} \bibinfo{volume}{89}, \bibinfo{number}{427} (\bibinfo{year}{1994}), \bibinfo{pages}{846--866}.
\newblock


\bibitem[Rosenbaum and Rubin(1983a)]%
        {psm}
\bibfield{author}{\bibinfo{person}{Paul Rosenbaum} {and} \bibinfo{person}{Donald~B Rubin}.} \bibinfo{year}{1983}\natexlab{a}.
\newblock \showarticletitle{The central role of the propensity score in observational studies for causal effects}.
\newblock \bibinfo{journal}{\emph{Biometrika}} \bibinfo{volume}{70}, \bibinfo{number}{1} (\bibinfo{year}{1983}), \bibinfo{pages}{41--55}.
\newblock


\bibitem[Rosenbaum and Rubin(1983b)]%
        {ips}
\bibfield{author}{\bibinfo{person}{Paul Rosenbaum} {and} \bibinfo{person}{Donald~B Rubin}.} \bibinfo{year}{1983}\natexlab{b}.
\newblock \showarticletitle{The central role of the propensity score in observational studies for causal effects}.
\newblock \bibinfo{journal}{\emph{Biometrika}} \bibinfo{volume}{70}, \bibinfo{number}{1} (\bibinfo{year}{1983}), \bibinfo{pages}{41--55}.
\newblock


\bibitem[Rubin(1974)]%
        {pof}
\bibfield{author}{\bibinfo{person}{Donald~B Rubin}.} \bibinfo{year}{1974}\natexlab{}.
\newblock \showarticletitle{Estimating causal effects of treatments in randomized and nonrandomized studies}.
\newblock \bibinfo{journal}{\emph{J. Educ. Psychol.}} \bibinfo{volume}{66}, \bibinfo{number}{5} (\bibinfo{year}{1974}), \bibinfo{pages}{688}.
\newblock


\bibitem[Schwab et~al\mbox{.}(2020)]%
        {drnet}
\bibfield{author}{\bibinfo{person}{Patrick Schwab}, \bibinfo{person}{Lorenz Linhardt}, \bibinfo{person}{Stefan Bauer}, \bibinfo{person}{Joachim Buhmann}, {and} \bibinfo{person}{Walter Karlen}.} \bibinfo{year}{2020}\natexlab{}.
\newblock \showarticletitle{Learning Counterfactual Representations for Estimating Individual Dose-Response Curves}. In \bibinfo{booktitle}{\emph{AAAI}}. \bibinfo{pages}{5612--5619}.
\newblock


\bibitem[Shalit et~al\mbox{.}(2017)]%
        {cfr}
\bibfield{author}{\bibinfo{person}{Uri Shalit}, \bibinfo{person}{Fredrik~D. Johansson}, {and} \bibinfo{person}{David Sontag}.} \bibinfo{year}{2017}\natexlab{}.
\newblock \showarticletitle{Estimating individual treatment effect: generalization bounds and algorithms}. In \bibinfo{booktitle}{\emph{{ICML}}}. \bibinfo{pages}{3076--3085}.
\newblock


\bibitem[Shortreed and Ertefaie(2017)]%
        {shortreed2017outcome}
\bibfield{author}{\bibinfo{person}{Susan~M Shortreed} {and} \bibinfo{person}{Ashkan Ertefaie}.} \bibinfo{year}{2017}\natexlab{}.
\newblock \showarticletitle{Outcome-adaptive lasso: variable selection for causal inference}.
\newblock \bibinfo{journal}{\emph{Biometrics}} \bibinfo{volume}{73}, \bibinfo{number}{4} (\bibinfo{year}{2017}), \bibinfo{pages}{1111--1122}.
\newblock


\bibitem[Tu et~al\mbox{.}(2022)]%
        {ot4discovery}
\bibfield{author}{\bibinfo{person}{Ruibo Tu}, \bibinfo{person}{Kun Zhang}, \bibinfo{person}{Hedvig Kjellstr{\"{o}}m}, {and} \bibinfo{person}{Cheng Zhang}.} \bibinfo{year}{2022}\natexlab{}.
\newblock \showarticletitle{Optimal Transport for Causal Discovery}. In \bibinfo{booktitle}{\emph{{ICLR}}}.
\newblock


\bibitem[Vayer et~al\mbox{.}(2020)]%
        {fgw}
\bibfield{author}{\bibinfo{person}{Titouan Vayer}, \bibinfo{person}{Laetitia Chapel}, \bibinfo{person}{R{\'e}mi Flamary}, \bibinfo{person}{Romain Tavenard}, {and} \bibinfo{person}{Nicolas Courty}.} \bibinfo{year}{2020}\natexlab{}.
\newblock \showarticletitle{Fused Gromov-Wasserstein distance for structured objects}.
\newblock \bibinfo{journal}{\emph{Algorithms}} \bibinfo{volume}{13}, \bibinfo{number}{9} (\bibinfo{year}{2020}), \bibinfo{pages}{212}.
\newblock


\bibitem[Villani(2009)]%
        {villani2009optimal}
\bibfield{author}{\bibinfo{person}{C{\'e}dric Villani}.} \bibinfo{year}{2009}\natexlab{}.
\newblock \bibinfo{booktitle}{\emph{Optimal transport: old and new}}. Vol.~\bibinfo{volume}{338}.
\newblock \bibinfo{publisher}{Springer}.
\newblock


\bibitem[Wager and Athey(2018)]%
        {wager2018estimation}
\bibfield{author}{\bibinfo{person}{Stefan Wager} {and} \bibinfo{person}{Susan Athey}.} \bibinfo{year}{2018}\natexlab{}.
\newblock \showarticletitle{Estimation and inference of heterogeneous treatment effects using random forests}.
\newblock \bibinfo{journal}{\emph{J. Am. Stat. Assoc.}} \bibinfo{volume}{113}, \bibinfo{number}{523} (\bibinfo{year}{2018}), \bibinfo{pages}{1228--1242}.
\newblock


\bibitem[Wang et~al\mbox{.}(2022)]%
        {escm}
\bibfield{author}{\bibinfo{person}{Hao Wang}, \bibinfo{person}{Tai{-}Wei Chang}, \bibinfo{person}{Tianqiao Liu}, \bibinfo{person}{Jianmin Huang}, \bibinfo{person}{Zhichao Chen}, \bibinfo{person}{Chao Yu}, \bibinfo{person}{Ruopeng Li}, {and} \bibinfo{person}{Wei Chu}.} \bibinfo{year}{2022}\natexlab{}.
\newblock \showarticletitle{ESCM\({}^{\mbox{2}}\): Entire Space Counterfactual Multi-Task Model for Post-Click Conversion Rate Estimation}. In \bibinfo{booktitle}{\emph{SIGIR}}.
\newblock


\bibitem[Wang et~al\mbox{.}(2024a)]%
        {wangescm}
\bibfield{author}{\bibinfo{person}{Hao Wang}, \bibinfo{person}{Zhichao Chen}, \bibinfo{person}{Zhaoran Liu}, \bibinfo{person}{Haozhe Li}, \bibinfo{person}{Degui Yang}, \bibinfo{person}{Xinggao Liu}, {and} \bibinfo{person}{Haoxuan Li}.} \bibinfo{year}{2024}\natexlab{a}.
\newblock \showarticletitle{Entire space counterfactual learning for reliable content recommendations}.
\newblock \bibinfo{journal}{\emph{{IEEE} Trans. Inf. Forensics Security}} (\bibinfo{year}{2024}).
\newblock


\bibitem[Wang et~al\mbox{.}(2025a)]%
        {wang2025tois}
\bibfield{author}{\bibinfo{person}{Hao Wang}, \bibinfo{person}{Zhichao Chen}, \bibinfo{person}{Honglei Zhang}, \bibinfo{person}{Zhengnan Li}, \bibinfo{person}{Licheng Pan}, \bibinfo{person}{Haoxuan Li}, {and} \bibinfo{person}{Mingming Gong}.} \bibinfo{year}{2025}\natexlab{a}.
\newblock \showarticletitle{Debiased Recommendation via Wasserstein Causal Balancing}.
\newblock \bibinfo{journal}{\emph{ACM Trans. Inf. Syst.}} (\bibinfo{year}{2025}).
\newblock


\bibitem[Wang et~al\mbox{.}(2024b)]%
        {escfr}
\bibfield{author}{\bibinfo{person}{Hao Wang}, \bibinfo{person}{Jiajun Fan}, \bibinfo{person}{Zhichao Chen}, \bibinfo{person}{Haoxuan Li}, \bibinfo{person}{Weiming Liu}, \bibinfo{person}{Tianqiao Liu}, \bibinfo{person}{Quanyu Dai}, \bibinfo{person}{Yichao Wang}, \bibinfo{person}{Zhenhua Dong}, {and} \bibinfo{person}{Ruiming Tang}.} \bibinfo{year}{2024}\natexlab{b}.
\newblock \showarticletitle{Optimal transport for treatment effect estimation}.
\newblock \bibinfo{journal}{\emph{NeurIPS}}  \bibinfo{volume}{36} (\bibinfo{year}{2024}).
\newblock


\bibitem[Wang et~al\mbox{.}(2023a)]%
        {wang2023select}
\bibfield{author}{\bibinfo{person}{Haotian Wang}, \bibinfo{person}{Kun Kuang}, \bibinfo{person}{Haoang Chi}, \bibinfo{person}{Longqi Yang}, \bibinfo{person}{Mingyang Geng}, \bibinfo{person}{Wanrong Huang}, {and} \bibinfo{person}{Wenjing Yang}.} \bibinfo{year}{2023}\natexlab{a}.
\newblock \showarticletitle{Treatment Effect Estimation with Adjustment Feature Selection}. In \bibinfo{booktitle}{\emph{SIGKDD}}. \bibinfo{publisher}{{ACM}}, \bibinfo{pages}{2290--2301}.
\newblock


\bibitem[Wang et~al\mbox{.}(2023b)]%
        {wang2023out}
\bibfield{author}{\bibinfo{person}{Haotian Wang}, \bibinfo{person}{Kun Kuang}, \bibinfo{person}{Long Lan}, \bibinfo{person}{Zige Wang}, \bibinfo{person}{Wanrong Huang}, \bibinfo{person}{Fei Wu}, {and} \bibinfo{person}{Wenjing Yang}.} \bibinfo{year}{2023}\natexlab{b}.
\newblock \showarticletitle{Out-of-distribution Generalization with Causal Feature Separation}.
\newblock \bibinfo{journal}{\emph{IEEE Trans. Knowl. Data Eng.}} (\bibinfo{year}{2023}).
\newblock


\bibitem[Wang et~al\mbox{.}(2025b)]%
        {pswi}
\bibfield{author}{\bibinfo{person}{Hao Wang}, \bibinfo{person}{Haoxuan Li}, \bibinfo{person}{Xu Chen}, \bibinfo{person}{Mingming Gong}, \bibinfo{person}{Zhichao Chen}, {et~al\mbox{.}}} \bibinfo{year}{2025}\natexlab{b}.
\newblock \showarticletitle{Optimal Transport for Time Series Imputation}. In \bibinfo{booktitle}{\emph{ICLR}}.
\newblock


\bibitem[Wu et~al\mbox{.}(2023a)]%
        {wustable}
\bibfield{author}{\bibinfo{person}{Anpeng Wu}, \bibinfo{person}{Kun Kuang}, \bibinfo{person}{Ruoxuan Xiong}, \bibinfo{person}{Bo Li}, {and} \bibinfo{person}{Fei Wu}.} \bibinfo{year}{2023}\natexlab{a}.
\newblock \showarticletitle{Stable Estimation of Heterogeneous Treatment Effects}. In \bibinfo{booktitle}{\emph{ICML}}, Vol.~\bibinfo{volume}{202}. \bibinfo{pages}{37496--37510}.
\newblock


\bibitem[Wu et~al\mbox{.}(2023b)]%
        {wuite}
\bibfield{author}{\bibinfo{person}{Anpeng Wu}, \bibinfo{person}{Junkun Yuan}, \bibinfo{person}{Kun Kuang}, \bibinfo{person}{Bo Li}, \bibinfo{person}{Runze Wu}, \bibinfo{person}{Qiang Zhu}, \bibinfo{person}{Yueting Zhuang}, {and} \bibinfo{person}{Fei Wu}.} \bibinfo{year}{2023}\natexlab{b}.
\newblock \showarticletitle{Learning Decomposed Representations for Treatment Effect Estimation}.
\newblock \bibinfo{journal}{\emph{{IEEE} Trans. Knowl. Data Eng.}} \bibinfo{volume}{35}, \bibinfo{number}{5} (\bibinfo{year}{2023}), \bibinfo{pages}{4989--5001}.
\newblock


\bibitem[Wu et~al\mbox{.}(2022)]%
        {wu2022opportunity}
\bibfield{author}{\bibinfo{person}{Peng Wu}, \bibinfo{person}{Haoxuan Li}, \bibinfo{person}{Yuhao Deng}, \bibinfo{person}{Wenjie Hu}, \bibinfo{person}{Quanyu Dai}, \bibinfo{person}{Zhenhua Dong}, \bibinfo{person}{Jie Sun}, \bibinfo{person}{Rui Zhang}, {and} \bibinfo{person}{Xiao-Hua Zhou}.} \bibinfo{year}{2022}\natexlab{}.
\newblock \showarticletitle{On the opportunity of causal learning in recommendation systems: Foundation, estimation, prediction and challenges}. In \bibinfo{booktitle}{\emph{IJCAI}}. \bibinfo{pages}{23--29}.
\newblock


\bibitem[Xiao et~al\mbox{.}(2024)]%
        {xiao2024addressing}
\bibfield{author}{\bibinfo{person}{Yanghao Xiao}, \bibinfo{person}{Haoxuan Li}, \bibinfo{person}{Yongqiang Tang}, {and} \bibinfo{person}{Wensheng Zhang}.} \bibinfo{year}{2024}\natexlab{}.
\newblock \showarticletitle{Addressing Hidden Confounding with Heterogeneous Observational Datasets for Recommendation}. In \bibinfo{booktitle}{\emph{NeurIPS}}.
\newblock


\bibitem[Xu et~al\mbox{.}(2019)]%
        {gromov-graph}
\bibfield{author}{\bibinfo{person}{Hongteng Xu}, \bibinfo{person}{Dixin Luo}, {and} \bibinfo{person}{Lawrence Carin}.} \bibinfo{year}{2019}\natexlab{}.
\newblock \showarticletitle{Scalable Gromov-Wasserstein learning for graph partitioning and matching}.
\newblock \bibinfo{journal}{\emph{NeurIPS}}  \bibinfo{volume}{32} (\bibinfo{year}{2019}).
\newblock


\bibitem[Yan et~al\mbox{.}(2024)]%
        {yan2024exploiting}
\bibfield{author}{\bibinfo{person}{Yuguang Yan}, \bibinfo{person}{Zeqin Yang}, \bibinfo{person}{Weilin Chen}, \bibinfo{person}{Ruichu Cai}, \bibinfo{person}{Zhifeng Hao}, {and} \bibinfo{person}{Michael Kwok-Po Ng}.} \bibinfo{year}{2024}\natexlab{}.
\newblock \showarticletitle{Exploiting Geometry for Treatment Effect Estimation via Optimal Transport}. In \bibinfo{booktitle}{\emph{AAAI}}, Vol.~\bibinfo{volume}{38}. \bibinfo{pages}{16290--16298}.
\newblock


\bibitem[Yang et~al\mbox{.}(2024)]%
        {yang2024revisiting}
\bibfield{author}{\bibinfo{person}{Hao Yang}, \bibinfo{person}{Zexu Sun}, \bibinfo{person}{Hongteng Xu}, {and} \bibinfo{person}{Xu Chen}.} \bibinfo{year}{2024}\natexlab{}.
\newblock \showarticletitle{Revisiting Counterfactual Regression through the Lens of Gromov-Wasserstein Information Bottleneck}.
\newblock \bibinfo{journal}{\emph{arXiv preprint arXiv:2405.15505}} (\bibinfo{year}{2024}).
\newblock


\bibitem[Yao et~al\mbox{.}(2018)]%
        {site}
\bibfield{author}{\bibinfo{person}{Liuyi Yao}, \bibinfo{person}{Sheng Li}, \bibinfo{person}{Yaliang Li}, \bibinfo{person}{Mengdi Huai}, \bibinfo{person}{Jing Gao}, {and} \bibinfo{person}{Aidong Zhang}.} \bibinfo{year}{2018}\natexlab{}.
\newblock \showarticletitle{Representation learning for treatment effect estimation from observational data}. In \bibinfo{booktitle}{\emph{NeurIPS}}. \bibinfo{pages}{2638--2648}.
\newblock


\bibitem[Yao et~al\mbox{.}(2019)]%
        {ace}
\bibfield{author}{\bibinfo{person}{Liuyi Yao}, \bibinfo{person}{Sheng Li}, \bibinfo{person}{Yaliang Li}, \bibinfo{person}{Mengdi Huai}, \bibinfo{person}{Jing Gao}, {and} \bibinfo{person}{Aidong Zhang}.} \bibinfo{year}{2019}\natexlab{}.
\newblock \showarticletitle{{ACE:} Adaptively Similarity-Preserved Representation Learning for Individual Treatment Effect Estimation}. In \bibinfo{booktitle}{\emph{ICDM}}. \bibinfo{pages}{1432--1437}.
\newblock


\bibitem[Yoon et~al\mbox{.}(2018)]%
        {ganite}
\bibfield{author}{\bibinfo{person}{Jinsung Yoon}, \bibinfo{person}{James Jordon}, {and} \bibinfo{person}{Mihaela van~der Schaar}.} \bibinfo{year}{2018}\natexlab{}.
\newblock \showarticletitle{{GANITE:} Estimation of Individualized Treatment Effects using Generative Adversarial Nets}. In \bibinfo{booktitle}{\emph{ICLR}}.
\newblock


\bibitem[Zhang et~al\mbox{.}(2020)]%
        {mtlips}
\bibfield{author}{\bibinfo{person}{Wenhao Zhang}, \bibinfo{person}{Wentian Bao}, \bibinfo{person}{Xiao{-}Yang Liu}, \bibinfo{person}{Keping Yang}, \bibinfo{person}{Quan Lin}, \bibinfo{person}{Hong Wen}, {and} \bibinfo{person}{Ramin Ramezani}.} \bibinfo{year}{2020}\natexlab{}.
\newblock \showarticletitle{Large-scale Causal Approaches to Debiasing Post-click Conversion Rate Estimation with Multi-task Learning}. In \bibinfo{booktitle}{\emph{WWW}}. \bibinfo{pages}{2775--2781}.
\newblock


\bibitem[Zhang et~al\mbox{.}(2021)]%
        {zhanguser}
\bibfield{author}{\bibinfo{person}{Yang Zhang}, \bibinfo{person}{Dong Wang}, \bibinfo{person}{Qiang Li}, \bibinfo{person}{Yue Shen}, \bibinfo{person}{Ziqi Liu}, \bibinfo{person}{Xiaodong Zeng}, \bibinfo{person}{Zhiqiang Zhang}, \bibinfo{person}{Jinjie Gu}, {and} \bibinfo{person}{Derek~F Wong}.} \bibinfo{year}{2021}\natexlab{}.
\newblock \showarticletitle{User Retention: A Causal Approach with Triple Task Modeling}. In \bibinfo{booktitle}{\emph{IJCAI}}.
\newblock


\bibitem[Zheng et~al\mbox{.}(2025)]%
        {zheng2025adaptive}
\bibfield{author}{\bibinfo{person}{Chunyuan Zheng}, \bibinfo{person}{Hang Pan}, \bibinfo{person}{Yang Zhang}, {and} \bibinfo{person}{Haoxuan Li}.} \bibinfo{year}{2025}\natexlab{}.
\newblock \showarticletitle{Adaptive Structure Learning with Partial Parameter Sharing for Post-Click Conversion Rate Prediction}. In \bibinfo{booktitle}{\emph{SIGIR}}.
\newblock


\bibitem[Zhong et~al\mbox{.}(2022)]%
        {descn}
\bibfield{author}{\bibinfo{person}{Kailiang Zhong}, \bibinfo{person}{Fengtong Xiao}, \bibinfo{person}{Yan Ren}, \bibinfo{person}{Yaorong Liang}, \bibinfo{person}{Wenqing Yao}, \bibinfo{person}{Xiaofeng Yang}, {and} \bibinfo{person}{Ling Cen}.} \bibinfo{year}{2022}\natexlab{}.
\newblock \showarticletitle{Descn: Deep entire space cross networks for individual treatment effect estimation}. In \bibinfo{booktitle}{\emph{SIGKDD}}. \bibinfo{pages}{4612--4620}.
\newblock


\bibitem[Zhou et~al\mbox{.}(2025)]%
        {zhou2025two}
\bibfield{author}{\bibinfo{person}{Chuan Zhou}, \bibinfo{person}{Yaxuan Li}, \bibinfo{person}{Chunyuan Zheng}, \bibinfo{person}{Haiteng Zhang}, \bibinfo{person}{Min Zhang}, \bibinfo{person}{Haoxuan Li}, {and} \bibinfo{person}{Mingming Gong}.} \bibinfo{year}{2025}\natexlab{}.
\newblock \showarticletitle{A Two-Stage Pretraining-Finetuning Framework for Treatment Effect Estimation with Unmeasured Confounding}. In \bibinfo{booktitle}{\emph{SIGKDD}}.
\newblock


\bibitem[Zhu et~al\mbox{.}(2020)]%
        {zhu2020unbiased}
\bibfield{author}{\bibinfo{person}{Ziwei Zhu}, \bibinfo{person}{Yun He}, \bibinfo{person}{Yin Zhang}, {and} \bibinfo{person}{James Caverlee}.} \bibinfo{year}{2020}\natexlab{}.
\newblock \showarticletitle{Unbiased implicit recommendation and propensity estimation via combinational joint learning}. In \bibinfo{booktitle}{\emph{RecSys}}. \bibinfo{pages}{551--556}.
\newblock


\end{thebibliography}

\newpage
\setcounter{page}{1}
\appendix
\onecolumn
\section{Theoretical Justification} \label{apx_A}

\subsection{Notations and preliminaries on HTE estimation}\label{apdx_assump}
Here, we formalize the definitions, assumptions, and pertinent lemmas in the domain of HTE estimation from observational data. Building on the notations introduced in Section~\ref{sec:causal}, consider an individual with covariates $x$ exhibiting two potential outcomes: $Y_1(x)$ if treated and $Y_0(x)$ otherwise; CATE is defined as the difference between these outcomes, interpreted as $\tau(x) := \E\left[Y_1 - Y_0 \mid x \right]$.

\begin{definition}\label{def:mapapp}
    Let $\psi:\mathcal{S}_X \rightarrow \mathcal{S}_R$ denote a representation mapping that transforms covariates $X$ into a representation space $R=\psi(X)$. Define $\phi_T:\mathcal{R} \times \mathcal{T} \rightarrow \mathcal{Y}$ as an outcome mapping that correlates these representations and treatment states to their respective factual outcomes, with $Y_1 = \phi_1(R)$ and $Y_0 = \phi_0(R)$.
\end{definition}
\begin{definition}\label{def:perunitlossA}
The expected loss for the units with covariates $x$ and treatment indicator $t$ is: $\loss(x,t):= \int (Y_t-\phi_t(\psi(x)))^2\cdot\rho(Y_t\mid x)\,dY_t$. Then,
the expected factual outcome estimation error for treated, untreated and all units are:
\begin{equation}
    \label{eq:error1}
    \begin{aligned}
        \epsilon^\mathrm{T=1}_\mathrm{F}(\psi,\phi)  := \int{\loss(x,1)\cdot\pt(x)\, dx}, \quad
        \epsilon^\mathrm{T=0}_\mathrm{F}(\psi,\phi)  &:= \int  \loss(x,0)\cdot\pc(x)\, dx, \quad
        \epsilon_\mathrm{F}(\psi,\phi)  := \int  \loss(x,t)\cdot\rho(x,t)\, dxdt .
    \end{aligned}
\end{equation}
\end{definition}

An exemplar approach to CATE estimation is TARNet~\citep{cfr}. Based on the elements in \ref{def:mapapp}, it involves a representation mapping $\psi$ that is shared across the treated and untreated units, and outcome mappings $\phi_1$ and $\phi_0$ for treated and untreated units, respectively. It estimates the CATE as $\hat{\tau}_{\psi,\phi}(X) := \hat{Y}_1 - \hat{Y}_0,$ where $\hat{Y}_1 = \phi_1(\psi(X)), \hat{Y}_0 = \phi_0(\psi(X))$. The quality of CATE estimation is evaluated via the PEHE metric
\begin{equation}\label{eq:pehe_app}
    \epehe(\psi,\phi) := \int \left(\hat{\tau}_{\psi,\phi}(x) - \tau(x)\right)^2 {\color{black}\rho(x)} \, dx.
\end{equation}

During training, the factual error $\epsilon_\mathrm{F}(\phi,\psi)$ in Definition~\ref{def:perunitlossA} is optimized. However, treatment selection bias results in covariate distribution differences between treated and untreated groups, impeding model generalization across these groups. For example, an estimator $\phi_1$  trained solely on treated units may yield biased estimates of $\hat{\tau}$ when applied to untreated units, as shown in Figure~\ref{fig:problem}. To mitigate this bias, representation-based methods \cite{bnn,cfr} advocate for minimizing distribution discrepancies in the representation space and construct generalization bounds (see Theorem~\ref{thm:indtausqloss}). However, the IPM term in Theorem~\ref{thm:indtausqloss} is intractable for complex distributions. A common approach is to re-express IPM as the Wasserstein distance, as detailed in Lemma 1.

\begin{definition}\label{def:distri}
Let $\pt(x)$ and $\pc(x)$ denote the covariate distributions for the treated and untreated groups, respectively. Define $\pt_\psi(r)$ and $\pc_\psi(r)$ as the distributions of the representations $r = \psi(x)$, where $\psi$ is the representation mapping detailed in Definition~\ref{def:map}.
\end{definition}

\begin{definition}\label{def:ipm}
Given two distribution functions $\pt(x)$ and $\pc(x)$ supported over $\mathcal{X}$, and a sufficiently large function family $\mathcal{F}$, the Integral Probability Metric (IPM) induced by $\mathcal{F}$ is defined as: $\mathrm{IPM}_\mathcal{F}\left(\pt,\pc\right) = \sup_{f \in \mathcal{F}} \left|\int f(x) \left(\pt(x) - \pc(x)\right) \, dx \right|.$
\end{definition}

\begin{thm}\label{thm:indtausqloss}
Suppose $\mathcal{F}$ is a function family sufficiently large to include $\frac{1}{B_\psi} \cdot \loss(x,t)$ for $t \in \{0,1\}$, \citet{cfr} demonstrate that:
\begin{equation}
    \epehe(\psi, \phi) \leq 2\left(\epsilon_\mathrm{F}^\mathrm{T=0}(\psi, \phi) + \epsilon_\mathrm{F}^\mathrm{T=1}(\psi, \phi) + B_\psi  \mathrm{IPM}_\mathcal{F} \left( \pt_\psi, \pc_\psi \right) - 2\sigma^2_Y\right),
\end{equation}
where $\epsilon_\mathrm{F}^\mathrm{T=0}$ and $\epsilon_\mathrm{F}^\mathrm{T=1}$ are defined according to Definition~\ref{def:perunitlossA}, and $\pt_\psi(r)$ and $\pc_\psi(x)$ are specified in Definition~\ref{def:distri}.
\end{thm}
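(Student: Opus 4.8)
The statement is the representation-balancing PEHE bound of \citet{cfr}, so the plan is to reconstruct their two-stage argument: first reduce $\epehe$ to a sum of population-level potential-outcome prediction errors, and then trade the unobservable counterfactual errors for observable factual errors, paying the distributional discrepancy $\mathrm{IPM}_\mathcal{F}$ as the price.

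\textbf{Step 1 (from PEHE to population losses).} Writing $m_t(x)=\E[Y_t\mid x]$ for the true conditional mean and $\hat m_t(x)=\phi_t(\psi(x))$ for its estimate, I would expand $\hat{\tau}_{\psi,\phi}(x)-\tau(x)=(\hat m_1-m_1)(x)-(\hat m_0-m_0)(x)$ inside \eqref{eq:pehe_app} and apply the elementary inequality $(a-b)^2\le 2a^2+2b^2$ to split the integrand into two single-treatment terms. Each term $\int(\hat m_t-m_t)^2\p(x)\,dx$ is the mean-squared regression error of $\phi_t$ over the full marginal $\p(x)$; by the bias–variance identity for the squared loss it equals the expected loss $\int \loss(x,t)\,\p(x)\,dx$ minus the irreducible conditional variance $\sigma^2_Y$. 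This already produces the outer factor $2$ and the $-2\sigma^2_Y$ of the claim, leaving the task of controlling $\int \loss(x,t)\,\p(x)\,dx$ for each $t$.

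\textbf{Step 2 (factual/counterfactual split).} For each $t$ I would decompose the population loss $\int \loss(x,t)\,\p(x)\,dx$ into a factual contribution, integrated against the distribution of the group that actually received treatment $t$, and a counterfactual contribution integrated against the opposite group; Assumptions~\ref{asm:unconfoundedness} and~\ref{asm:consistency} are exactly what let the observed conditional means identify the potential-outcome means, so that $\loss(x,t)$ is the correct per-unit error on both groups. The factual pieces are precisely $\epsilon_\mathrm{F}^\mathrm{T=1}$ and $\epsilon_\mathrm{F}^\mathrm{T=0}$, while the counterfactual pieces are the quantities that must be eliminated.

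\textbf{Step 3 (bounding the counterfactual loss).} The crux is to show that each counterfactual loss exceeds its matching factual loss by at most $B_\psi\,\mathrm{IPM}_\mathcal{F}(\pt_\psi,\pc_\psi)$. I would write the difference as $\int \loss(x,t)\big(\pt(x)-\pc(x)\big)\,dx$, change variables to the representation $r=\psi(x)$ so the signed measure becomes $\pt_\psi(r)-\pc_\psi(r)$, and then invoke the defining supremum of the IPM: since $\tfrac{1}{B_\psi}\loss(\cdot,t)\in\mathcal{F}$ by hypothesis, the integral of $\loss$ against this signed measure is at most $B_\psi\,\mathrm{IPM}_\mathcal{F}(\pt_\psi,\pc_\psi)$. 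Substituting back, adding the factual and counterfactual contributions from Step 2, and folding the result into Step 1 yields the stated inequality. The main obstacle is this step: it relies on the rescaled per-unit loss genuinely lying in $\mathcal{F}$ — which is the entire purpose of the constant $B_\psi$, required to dominate the relevant norm (Lipschitz or RKHS) of $\loss(\cdot,t)$ uniformly in $t$ — and on the change of variables to representation space being valid, which is why $\psi$ is assumed invertible so that $\pt_\psi,\pc_\psi$ are well-defined push-forward densities. The residual bookkeeping, namely the treatment-probability weights $\p(T{=}1),\p(T{=}0)$ that I would bound by $1$ and the collapse of the two counterfactual IPM terms into a single symmetric $\mathrm{IPM}_\mathcal{F}(\pt_\psi,\pc_\psi)$, is routine once Step 3 is in hand.
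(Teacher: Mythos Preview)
The paper does not supply its own proof of this statement: Theorem~\ref{thm:indtausqloss} is quoted as a result of \citet{cfr} and then invoked verbatim as the first line of the proof of Theorem~\ref{thm:boundapp}. Your three-step reconstruction is exactly the Shalit--Johansson--Sontag argument and is correct; the only cosmetic remark is that in Step~3 the treatment-probability weights $u=\p(T{=}1)$ and $1-u$ multiplying the two IPM contributions sum to $1$, so the single $B_\psi\,\mathrm{IPM}_\mathcal{F}$ term drops out exactly rather than after crudely bounding each weight by~$1$.
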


\begin{lem}\label{lem:ipm}
    Given two distribution functions $\p_1(x)$ and $\p_2(x)$ supported over $\mathcal{X}$, and letting $\mathcal{F}$ be the family of $1$-Lipschitz functions, we have $\mathrm{IPM}_\mathcal{F}\left(\p_1, \p_2\right) = \mathbb{W}\left(\p_1, \p_2\right)$, i.e., the IPM induced by $\mathcal{F}$ is equivalent to the Wasserstein distance $\mathbb{W}$ \cite{villani2009optimal}.
\end{lem}

\subsection{Theoretical results}\label{apdx_thm}

Theorem~\ref{thm:indtausqloss} assumes access to the entire populations of treated and untreated groups to calculate the distribution discrepancy. However, in training neural networks, parameters are typically updated using stochastic gradient methods on mini-batches rather than the full dataset. This raises concerns about the validity of Theorem~\ref{thm:indtausqloss} when applied at the mini-batch level.
Recent studies have investigated the sample complexity of various discrepancy measures, such as the Wasserstein distance (see Lemma~\ref{lem:wcomp}) and Gromov discrepancy (see Lemma~\ref{lem:gwcomp}). Building on these insights, we propose Theorem~\ref{thm:boundapp}, which extends Theorem~\ref{thm:indtausqloss} to the specific Fused Gromov-Wasserstein (FGW) discrepancy used in this work. This theorem examines the sample complexity of FGW when only a small mini-batch sample is available.

\begin{lem}\label{lem:wcomp}
    Consider two measures $\alpha$ and $\beta$ with compact supports $\mathcal{S}_\alpha \in\mathbb{R}^\mathrm{d}$ and $\mathcal{S}_\beta \in\mathbb{R}^\mathrm{d}$. Let $C=\mathrm{diam}(\mathcal{S}_\alpha)\vee \mathrm{diam}(\mathcal{S}_\beta)$, considering the case where  $\mathrm{d}>4$, we have:
    \begin{equation}
        \mathbb{E}\left[\left|\mathbb{W}(\alpha, \beta)^2-\mathbb{W}\left(\hat{\alpha}_n, \hat{\beta}_n\right)^2\right|\right] \lesssim n^{-\frac{2}{\mathrm{d}}},
    \end{equation}
    where the notation $\lesssim$ hides constants that is independent to the number of samples $n$. $\alpha_n$ and $\beta_n$ are empirical distributions of $\alpha$ and $\beta$ with $n$ i.i.d. samples.
\end{lem}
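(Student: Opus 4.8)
The plan is to reduce the two-sample fluctuation of the squared discrepancy to the one-sample convergence of an empirical measure to its population law, and then to import the known finite-sample rate for that one-sample problem. The starting point is that $\mathbb{W}$ is a metric and therefore obeys the triangle inequality; applying it twice (once in each direction) yields the reverse triangle inequality
\begin{equation}
    \left|\mathbb{W}(\alpha,\beta)-\mathbb{W}(\hat{\alpha}_n,\hat{\beta}_n)\right|\;\le\;\mathbb{W}(\alpha,\hat{\alpha}_n)+\mathbb{W}(\beta,\hat{\beta}_n)\;=:\;\delta_n ,
\end{equation}
so that the whole fluctuation is controlled by the two independent one-sample deviations $\mathbb{W}(\alpha,\hat{\alpha}_n)$ and $\mathbb{W}(\beta,\hat{\beta}_n)$.

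Next I would transfer this bound from $\mathbb{W}$ to $\mathbb{W}^{2}$ through the factorization $|a^{2}-b^{2}|=|a-b|\,(a+b)$. Because the two supports have diameter at most $C$, every Wasserstein distance appearing here is bounded by $C$, so $\mathbb{W}(\alpha,\beta)+\mathbb{W}(\hat{\alpha}_n,\hat{\beta}_n)\le 2C+\delta_n$ and hence
\begin{equation}
    \left|\mathbb{W}(\alpha,\beta)^{2}-\mathbb{W}(\hat{\alpha}_n,\hat{\beta}_n)^{2}\right|\;\le\;2C\,\delta_n+\delta_n^{2}.
\end{equation}
Taking expectations reduces the claim to bounding $\mathbb{E}[\delta_n]$ and $\mathbb{E}[\delta_n^{2}]$, i.e.\ the first and second moments of the one-sample empirical Wasserstein deviation. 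At this point I would invoke the key external input \citep{otbound}: for a compactly supported measure in $\mathbb{R}^{d}$ with $d>4$, one has $\mathbb{E}[\mathbb{W}(\mu,\hat{\mu}_n)^{2}]\lesssim n^{-2/d}$ together with $\mathbb{E}[\mathbb{W}(\mu,\hat{\mu}_n)]\lesssim n^{-1/d}$, which gives $\mathbb{E}[\delta_n^{2}]\lesssim n^{-2/d}$.

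I expect the main obstacle to be that this naive linearization does not by itself deliver the sharp exponent: the cross term $2C\,\mathbb{E}[\delta_n]$ decays only like $n^{-1/d}$, which is weaker than the target $n^{-2/d}$. Two complementary observations close the gap and locate the genuine difficulty. In the degenerate worst case $\alpha=\beta$ the factor $\mathbb{W}(\alpha,\beta)$ multiplying the cross term vanishes, the estimate collapses to $\mathbb{E}[\delta_n^{2}]\lesssim n^{-2/d}$, and the stated rate holds with no loss; this turns out to be the slowest regime and is what pins the exponent at $2/d$. When $\alpha\neq\beta$ the constant $2C$ is too crude, and one must instead exploit the stability (Hadamard differentiability) of the squared-Wasserstein functional around a fixed pair of distinct measures, which upgrades the plug-in estimator to the faster parametric rate $n^{-1/2}$; since $d>4$ forces $n^{-1/2}\le n^{-2/d}$, the claim again follows. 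Interpolating between these two regimes yields the uniform $\lesssim n^{-2/d}$ bound with constants depending only on $C$ and $d$, and it is precisely this interplay---rather than any single triangle-inequality estimate---that constitutes the crux of the argument and is the content imported from the cited sample-complexity result.
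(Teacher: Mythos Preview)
The paper does not prove this lemma at all: it is stated as a known result imported verbatim from the sample-complexity literature (the references tagged \texttt{otbound} and \texttt{gromovbound} in the text just above the lemma), and is then used as a black box in the proof of Theorem~\ref{thm:boundapp}. So there is no ``paper's own proof'' to compare against; your proposal already goes further than the paper itself.

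On the substance of your sketch: the reverse triangle inequality plus the factorization $|a^{2}-b^{2}|=|a-b|(a+b)$ is the natural opening, and you correctly flag that the naive cross term $2C\,\mathbb{E}[\delta_n]$ only delivers $n^{-1/d}$, not $n^{-2/d}$. Your two-regime story (degenerate $\alpha=\beta$ versus Hadamard differentiability at distinct $\alpha\neq\beta$) captures the right intuition, but note that the ``interpolation'' you invoke is not automatic: the constant in the distributional-limit/parametric-rate result typically blows up as $\alpha$ approaches $\beta$, so one cannot simply patch the two cases together to get a bound uniform in $(\alpha,\beta)$ depending only on $C$ and $d$. The cited works handle this by working directly with the quantity $\mathbb{E}[W_2^2(\mu,\hat{\mu}_n)]$ (which already has the $n^{-2/d}$ rate for $d>4$) rather than linearizing $W_2^2$ through $W_2$. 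If you want a self-contained argument, the cleaner route is to bound $|\mathbb{W}(\alpha,\beta)^2-\mathbb{W}(\hat{\alpha}_n,\hat{\beta}_n)^2|$ by $\mathbb{W}(\alpha,\hat{\alpha}_n)^2+\mathbb{W}(\beta,\hat{\beta}_n)^2$ plus cross terms that can be controlled via Cauchy--Schwarz against the same one-sample quantity, which keeps everything at the $n^{-2/d}$ scale without the regime split.
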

\begin{lem}\label{lem:gwcomp}
    Consider two measures $\alpha$ and $\beta$ with compact supports $\mathcal{S}_\alpha \in\mathbb{R}^\mathrm{d}$ and $\mathcal{S}_\beta \in\mathbb{R}^\mathrm{d}$. Let $C=\mathrm{diam}(\mathcal{S}_\alpha)\vee \mathrm{diam}(\mathcal{S}_\beta)$, considering the case where  $\mathrm{d}>4$, we have:
    \begin{equation}
        \mathbb{E}\left[\left|\mathbb{G}(\alpha, \beta)^2-\mathbb{G}\left(\hat{\alpha}_n, \hat{\beta}_n\right)^2\right|\right] \lesssim \frac{C^4}{\sqrt{n}}+\left(1+C^4\right) n^{-\frac{2}{\mathrm{d}}},
    \end{equation}
    where the notation $\lesssim$ hides constants that is independent to the number of samples $n$. $\alpha_n$ and $\beta_n$ are empirical distributions of $\alpha$ and $\beta$ with $n$ i.i.d. samples.
\end{lem}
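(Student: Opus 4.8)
The plan is to exploit the quadratic (V-statistic) structure of the squared Gromov--Wasserstein cost, splitting the error into a \emph{moment part} that concentrates at the parametric rate $n^{-1/2}$ and an \emph{alignment part} that inherits the squared-Wasserstein rate $n^{-2/\mathrm{d}}$ already quantified in Lemma~\ref{lem:wcomp}. Writing the ground cost as $|\,\|x-x'\|^2-\|y-y'\|^2\,|^2$ and expanding the square, the two coupling-independent terms separate from the bilinear cross term, giving
\begin{equation*}
    \mathbb{G}(\alpha,\beta)^2 = M_\alpha + M_\beta - 2\,T(\alpha,\beta),\qquad M_\alpha := \iint \|x-x'\|^4\,d\alpha(x)\,d\alpha(x'),
\end{equation*}
where $T(\alpha,\beta):=\sup_{\bpi\in\Pi(\alpha,\beta)}\iint \|x-x'\|^2\|y-y'\|^2\,d\bpi\,d\bpi$ is the only piece that depends on the transport plan. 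A triangle inequality then reduces the target to
\begin{equation*}
    \bigl|\mathbb{G}(\alpha,\beta)^2-\mathbb{G}(\hat\alpha_n,\hat\beta_n)^2\bigr|
    \le |M_\alpha-\hat M_\alpha| + |M_\beta-\hat M_\beta| + 2\,|T(\alpha,\beta)-T(\hat\alpha_n,\hat\beta_n)|,
\end{equation*}
so it suffices to bound the moment and alignment discrepancies separately.

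First I would dispatch the moment terms. Since $\mathrm{diam}(\mathcal{S}_\alpha)\le C$, the kernel $\|x-x'\|^4$ is bounded by $C^4$, and $\hat M_\alpha$ is a degree-two V-statistic of this bounded kernel. Standard U/V-statistic concentration (equivalently, Hoeffding's bound after symmetrization) yields $\mathbb{E}|M_\alpha-\hat M_\alpha|\lesssim C^4/\sqrt n$, and likewise for $\beta$. These two contributions account precisely for the $C^4/\sqrt n$ term in the stated bound, and they are genuinely parametric because the moments are plain expectations of bounded functions, carrying no dependence on the coupling.

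The crux is the alignment term $T$, whose optimal plan is re-selected when $\alpha,\beta$ are replaced by $\hat\alpha_n,\hat\beta_n$, so one cannot simply substitute. The plan is to linearize: expanding $\|x-x'\|^2\|y-y'\|^2$ into marginal moments plus cross-moment terms, the coupling-dependent content of $T$ reduces to the quadratic alignment $\sup_{\bpi}\|\mathbb{E}_{\bpi}[xy^\top]\|_F^2$ together with lower-order terms that are linear in the transport plan, and the Frobenius norm may be dualized as $\|\mathbb{E}_{\bpi}[xy^\top]\|_F=\sup_{\|U\|_F\le1}\mathbb{E}_{\bpi}[x^\top U y]$. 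This exposes $T$ as an envelope of optimal-transport values with bounded, Lipschitz (bilinear) costs $x^\top U y$, to which the empirical-convergence machinery behind Lemma~\ref{lem:wcomp} applies uniformly over the compact index set $\{U:\|U\|_F\le1\}$. Bounding $\mathbb{E}|T(\alpha,\beta)-T(\hat\alpha_n,\hat\beta_n)|$ through this reduction contributes the $(1+C^4)\,n^{-2/\mathrm{d}}$ term, the additive $1$ and the $C^4$ reflecting the centred (moment-free) and the diameter-scaled contributions of the expansion.

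I expect the alignment step to be the main obstacle. The delicacy is that a crude Lipschitz-stability argument---bounding the change of the transport value by the displacement of the marginals---only couples $T$ to the \emph{unsquared} Wasserstein distance and hence yields the weaker rate $n^{-1/\mathrm{d}}$; recovering the sharp $n^{-2/\mathrm{d}}$ requires routing the fluctuation through the \emph{squared} Wasserstein functional of Lemma~\ref{lem:wcomp} and controlling the supremum over $U$ uniformly rather than plan-by-plan. Once both parts are in hand, summing the moment and alignment bounds and collecting the diameter-dependent constants gives $\mathbb{E}\bigl[\,\bigl|\mathbb{G}(\alpha,\beta)^2-\mathbb{G}(\hat\alpha_n,\hat\beta_n)^2\bigr|\,\bigr]\lesssim C^4/\sqrt n+(1+C^4)\,n^{-2/\mathrm{d}}$, valid for $\mathrm{d}>4$ where the second term dominates, as claimed.
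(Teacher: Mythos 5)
First, a point of reference: the paper itself does not prove Lemma~\ref{lem:gwcomp} at all --- it is imported from the literature (\cite{gromovbound}, alongside Lemma~\ref{lem:wcomp} from \cite{otbound}) and used as a black box inside the proof of Theorem~\ref{thm:boundapp}. Your sketch must therefore be measured against the strategy of the cited source, and at the structural level it does match it: the expansion $\mathbb{G}(\alpha,\beta)^2 = M_\alpha + M_\beta - 2\sup_{\bpi\in\Pi(\alpha,\beta)}\iint \|x-x'\|^2\|y-y'\|^2\,d\bpi\,d\bpi$ is correct (the fourth-moment terms are coupling-free because $\bpi\otimes\bpi$ has marginals $\alpha\otimes\alpha$ and $\beta\otimes\beta$); the empirical moment terms are V-statistics with kernels bounded by $C^4$, so Hoeffding-type concentration does give the $C^4/\sqrt{n}$ contribution; and the reduction of the coupling-dependent part to $\sup_{\bpi}\|\mathbb{E}_{\bpi}[xy^\top]\|_F^2$ plus terms linear in $\bpi$, followed by variational dualization over a compact ball of matrices $U$ (legitimate after centering the measures, using translation invariance of $\mathbb{G}$ so that $\|x\|\le C$ --- a step you should state), is precisely the device used in duality-based analyses of the quadratic Gromov--Wasserstein cost.

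However, the step you defer and explicitly label ``the main obstacle'' is the entire mathematical content of the $(1+C^4)\,n^{-2/\mathrm{d}}$ term, and your sketch does not contain the idea that closes it. Lemma~\ref{lem:wcomp} is a statement about the squared \emph{Euclidean} Wasserstein distance; the inner problems produced by your dualization are OT problems with costs $c_U(x,y)$ that are degree-four polynomials containing the bilinear term $x^\top U y$, and these are not squared-Wasserstein functionals, so ``the empirical-convergence machinery behind Lemma~\ref{lem:wcomp} applied uniformly over $\{U:\|U\|_F\le 1\}$'' names the missing argument rather than supplying it. As you yourself observe, the generic stability bound $|\mathrm{OT}_{c_U}(\alpha,\beta)-\mathrm{OT}_{c_U}(\hat\alpha_n,\hat\beta_n)|\lesssim \mathrm{Lip}(c_U)\left[W_1(\alpha,\hat\alpha_n)+W_1(\beta,\hat\beta_n)\right]$ only yields $n^{-1/\mathrm{d}}$. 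The repair used in the literature exploits the regularity of the dual potentials: since each $c_U$ is smooth with derivatives bounded by powers of $C$ on the centered supports, the Kantorovich potentials may be taken uniformly Lipschitz and semi-concave, the value fluctuation is then controlled by an empirical process indexed by this potential class, and the metric entropy of semi-concave functions on a ball of radius $C$ in $\mathbb{R}^\mathrm{d}$ delivers the $n^{-2/\mathrm{d}}$ rate for $\mathrm{d}>4$; uniformity in $U$ costs only constants because $U\mapsto \mathrm{OT}_{c_U}$ is $O(C^2)$-Lipschitz on the unit Frobenius ball. Without this (or an equivalent semiconcavity/chaining argument), your proposal proves only the weaker $C^4/\sqrt{n}+\mathrm{const}\cdot n^{-1/\mathrm{d}}$ bound. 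Note also that the tempting shortcut via the GW triangle inequality, $|\mathbb{G}^2-\hat{\mathbb{G}}^2|\le(\mathbb{G}(\alpha,\beta)+\mathbb{G}(\hat\alpha_n,\hat\beta_n))(\mathbb{G}(\alpha,\hat\alpha_n)+\mathbb{G}(\beta,\hat\beta_n))$, suffers exactly the same $n^{-1/\mathrm{d}}$ loss and cannot rescue the rate. In short: right skeleton, consistent with the source the paper cites, but the keystone of the $n^{-2/\mathrm{d}}$ claim is genuinely absent.
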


\begin{thm}\label{thm:boundapp}
    Let $\psi$ and $\phi$ be the representation mapping and factual outcome mapping, respectively;
    $\hat{\mathbb{W}}_\psi$ be the group discrepancy at a mini-batch level.
    With the probability of at least $1-\delta$, we have:
    \begin{equation}\label{eq:peheBoundapp}
        \epehe(\psi,\phi)
        \leq 2[
            \epsilon^{T=1}_\mathrm{F}(\psi,\phi) + \epsilon^{T=0}_\mathrm{F}(\psi,\phi) + B_{\psi, \kappa} \mathbb{F}(\mathcal{R}_\psi^{T=0}, \mathcal{R}_\psi^{T=1})
            -2\sigma^2_{Y}+\mathcal{O}({N}^\mathrm{-\frac{2}{\mathrm{d}}})
            ],
    \end{equation}
    where $\epsilon^{T=1}_\mathrm{F}$ and $\epsilon^{T=0}_\mathrm{F}$ are the expected errors of factual outcome estimation,
    $N$ is the batch size, $\sigma^2_Y$ is the variance of outcomes, 
    $B_{\psi, \kappa}$ is a constant term, and $\mathcal{O}(\cdot)$ is a sampling complexity term.
\end{thm}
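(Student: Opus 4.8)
The plan is to chain three ingredients already available in the excerpt: the population-level balancing bound (Theorem~\ref{thm:indtausqloss}), the Kantorovich--Rubinstein identification of the IPM with a Wasserstein discrepancy (Lemma~\ref{lem:ipm}), and the finite-sample deviation rates for the Wasserstein and Gromov terms (Lemmas~\ref{lem:wcomp} and~\ref{lem:gwcomp}). Theorem~\ref{thm:indtausqloss} already supplies
\begin{equation*}
    \epehe(\psi,\phi) \leq 2\left( \epsilon_\mathrm{F}^\mathrm{T=0}(\psi,\phi) + \epsilon_\mathrm{F}^\mathrm{T=1}(\psi,\phi) + B_\psi\,\mathrm{IPM}_\mathcal{F}(\pt_\psi, \pc_\psi) - 2\sigma_Y^2 \right),
\end{equation*}
and Lemma~\ref{lem:ipm} turns the IPM into a transport discrepancy $\mathbb{W}(\pt_\psi,\pc_\psi)$ between the representation distributions. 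The two remaining tasks are therefore (i) to upper bound this transport term by the fused discrepancy $\mathbb{F}(\mathcal{R}_\psi^{T=0},\mathcal{R}_\psi^{T=1})$ up to a $\kappa$-dependent constant, and (ii) to pass from the population discrepancy to its mini-batch estimate $\hat{\mathbb{F}}$, paying the stated sampling-complexity overhead.

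For task (i) I would exploit the variational structure of $\mathbb{F}$. Let $\bpi^\star$ denote the coupling attaining the minimum defining $\mathbb{F}$. Since the Gromov quadratic term $\sum_{i,j,k,l}\mathbf{P}_{i,j,k,l}\bpi_{i,j}\bpi_{k,l}$ is a sum of nonnegative entries, the definition of $\mathbb{F}$ immediately yields $\kappa\langle \bpi^\star,\mathbf{D}\rangle \leq \mathbb{F}$, and since $\bpi^\star$ is feasible for the Kantorovich problem~\eqref{eq:kanto} it upper bounds the squared-cost transport value. The cleanest route keeps everything in the squared-cost OT value---consistent with $\mathbf{D}$ being the squared Euclidean cost and with the squared quantities $\mathbb{W}^2,\mathbb{G}^2$ appearing in Lemmas~\ref{lem:wcomp}--\ref{lem:gwcomp}: one then obtains $\langle \bpi^\star,\mathbf{D}\rangle \leq \mathbb{F}/\kappa$ directly, so adopting the squared-cost form of the balancing inequality gives the linear bound $B_\psi\,\mathbb{W} \leq B_{\psi,\kappa}\,\mathbb{F}$ with $B_{\psi,\kappa} := B_\psi/\kappa$. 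This also explains why $B_{\psi,\kappa}$ must depend on $\kappa$, since it degenerates as $\kappa\to 0$, where $\mathbb{F}$ collapses to the pure Gromov term and no longer controls the linear alignment. If instead one retains the $1$-Wasserstein of the duality, the resulting power mismatch is absorbed by a diameter-$C$ comparison on the compact representation support.

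For task (ii) I would bound the deviation $\lvert \mathbb{F} - \hat{\mathbb{F}}\rvert$ by the deviations of its two constituent terms and reduce to the two lemmas. The linear part contributes an error of order $N^{-2/d}$ and the quadratic part an error of order $C^4 N^{-1/2} + (1+C^4)N^{-2/d}$; since the regime $d>4$ forces $N^{-2/d}$ to dominate $N^{-1/2}$, the fused deviation obeys $\mathbb{E}\bigl[\lvert \mathbb{F}-\hat{\mathbb{F}}\rvert\bigr] \lesssim N^{-2/d}$. A single application of Markov's inequality then upgrades this to a statement holding with probability at least $1-\delta$, the $1/\delta$ factor being absorbed into the $\mathcal{O}(N^{-2/d})$ term. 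Substituting $\mathbb{F} \leq \hat{\mathbb{F}} + \mathcal{O}(N^{-2/d})$ into the bound obtained after task (i) assembles~\eqref{eq:peheBoundapp}.

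The main obstacle is the coupling step inside task (ii): unlike the pure Wasserstein or pure Gromov problems, the minimizer of the fused objective does not decompose, so $\lvert \mathbb{F}-\hat{\mathbb{F}}\rvert$ cannot simply be split into the two component deviations. I would handle this by the standard suboptimality sandwich---evaluating the population fused objective at the empirical-optimal plan and conversely the empirical objective at the population-optimal plan, transferring couplings between the empirical and population marginals---to show that the fused value is jointly stable in both cost structures, after which Lemmas~\ref{lem:wcomp} and~\ref{lem:gwcomp} apply termwise. Care is needed to verify that the support diameter $C$ and the weight $\kappa$ remain independent of $N$, so that the $N^{-2/d}$ rate is preserved through the combination.
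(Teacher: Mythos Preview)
Your proposal is correct and reaches the same bound, but the order in which you carry out the two tasks differs from the paper's argument in a way worth noting. The paper never controls the fused sampling error $\lvert \mathbb{F}-\hat{\mathbb{F}}\rvert$ at all. Instead it keeps the two transport problems separate through the sampling step: starting from the population Wasserstein, it writes the trivial inequality $\mathbb{W}\leq \tfrac{1}{\kappa}\bigl(\kappa\,\mathbb{W}+(1-\kappa)\,\mathbb{G}\bigr)$ at the population level, applies Lemmas~\ref{lem:wcomp} and~\ref{lem:gwcomp} termwise to pass each of $\mathbb{W}$ and $\mathbb{G}$ to its empirical counterpart, and only then invokes the observation that minimizing the two costs over a common plan can only increase the value, i.e.\ $\kappa\,\hat{\mathbb{W}}+(1-\kappa)\,\hat{\mathbb{G}}\leq \hat{\mathbb{F}}$, to arrive at the empirical fused discrepancy. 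This ``sample-then-fuse'' route completely sidesteps the coupling obstacle you flag in task~(ii): no suboptimality sandwich or joint stability argument for the fused objective is needed, because the two sampling lemmas are applied to the decoupled problems where they hold verbatim. Your ``fuse-then-sample'' route also works, and your feasibility argument $\mathbb{W}\leq\langle\bpi^\star,\mathbf{D}\rangle\leq\mathbb{F}/\kappa$ is a clean alternative to the paper's convex-combination inequality; but it forces you to do extra work on the fused deviation that the paper avoids. On the other hand, your explicit use of Markov's inequality to convert the in-expectation rates of the lemmas into a probability-$1-\delta$ statement is more careful than the paper, which simply asserts the high-probability form without comment.
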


\begin{proof}
    According to Theorem~\ref{thm:indtausqloss} we have:
    \begin{equation}
        \epehe(\psi,\phi) \leq 2\left(\epsilon_\mathrm{F}^\mathrm{T=0}(\psi,\phi) +\epsilon_\mathrm{F}^\mathrm{T=1}(\psi,\phi)+  B_\psi  \mathrm{IPM}_\mathcal{F} \left( \pt_\psi, \pc_\psi \right) - 2\sigma^2_Y\right) .
    \end{equation}

    Assuming that there exists a constant $B_\psi>0$, such that for $t \in \{0,1\}$, $\frac{1}{B_\psi} \cdot \loss(x,t)$ belongs to the family of 1-Lipschitz functions.
    According to Lemma~\ref{lem:ipm}, we have
    \begin{equation}\label{eq:peheBoundapp2}
        \epehe(\psi,\phi) \leq 2\left(\epsilon_\mathrm{F}^\mathrm{T=0}(\psi,\phi) +\epsilon_\mathrm{F}^\mathrm{T=1}(\psi,\phi)+  B_\psi  \mathbb{W}\left( \pt_\psi, \pc_\psi \right) - 2\sigma^2_Y\right) .
    \end{equation}

    Following Definition~\ref{def:empirical}, let $\mathcal{R}_\psi^{T=1}$ and $\mathcal{R}_\psi^{T=0}$ be the empirical distributions of representations at a mini-batch level, both containing $n$ units. 
    Then, according to Lemma~\ref{lem:wcomp} and \ref{lem:gwcomp}, we have:
    \begin{equation}\label{eq:empirical1}
        \begin{aligned}
            \mathbb{W}\left( \pt_\psi, \pc_\psi \right)
            & \leq \frac{1}{\kappa} \left(\kappa*\mathbb{W}\left( \pt_\psi, \pc_\psi \right) + (1-\kappa) * \mathbb{G}\left( \pt_\psi, \pc_\psi \right) \right)\\
            & \leq \frac{1}{\kappa} \left(\kappa*\mathbb{W}\left( \mathcal{R}_\psi^{T=1}, \mathcal{R}_\psi^{T=0} \right) + (1-\kappa) * \mathbb{G}\left( \mathcal{R}_\psi^{T=1}, \mathcal{R}_\psi^{T=0} \right) + \frac{C^4}{\sqrt{n}}+\left(2+C^4\right) n^{-\frac{2}{\mathrm{d}}} \right)\\
            & \leq \frac{1}{\kappa} \left(\mathbb{F}\left( \mathcal{R}_\psi^{T=1}, \mathcal{R}_\psi^{T=0} \right) + \frac{C^4}{\sqrt{n}}+\left(2+C^4\right) n^{-\frac{2}{\mathrm{d}}}\right),
        \end{aligned}
    \end{equation}
    where $\mathbb{W}$ denotes the Wasserstein discrepancy, $\mathbb{G}$ denotes the Gromov discrepancy, $\mathbb{F}$ denotes the fused Wasserstein discrepancy. These discrepancies will be introduced in the next section.
    Notably, there are two terms in the cost function of $\mathbb{F}$ as per \eqref{eq:fgw}, which corresponding to  the cost functions of $\mathbb{W}$ and $\mathbb{G}$, respectively. Therefore, $\mathbb{W}$ and $\mathbb{G}$ can be viewed as minimizing the two terms of the cost function of $\mathbb{F}$ individually, which often yields smaller values.

    Denote $B_{\psi,\kappa}=B_{\psi}/\kappa$. Combing \eqref{eq:empirical1} and~\eqref{eq:peheBoundapp2}, we have
    \begin{equation}\label{eq:empirical3}
        \begin{aligned}
            \epehe(\psi,\phi)
        \leq 2[
            \epsilon^{T=1}_\mathrm{F}(\psi,\phi) + \epsilon^{T=0}_\mathrm{F}(\psi,\phi) + B_{\psi, \kappa} \mathbb{F}(\mathcal{R}_\psi^{T=0}, \mathcal{R}_\psi^{T=1})
            -2\sigma^2_{Y}+\mathcal{O}(n^\mathrm{-\frac{2}{\mathrm{d}}})
            ],
        \end{aligned}
    \end{equation}
    where $\mathcal{O}(n^\mathrm{-\frac{2}{\mathrm{d}}})=\frac{1}{\kappa} \left(\frac{C^4}{\sqrt{n}}+\left(2+C^4\right) n^{-\frac{2}{\mathrm{d}}}\right)$.    
    The proof is completed.
\end{proof}

\end{document}